
\documentclass{article}

\usepackage{microtype}
\usepackage{graphicx}
\usepackage{float}
\usepackage{subfigure}
\usepackage{booktabs} 
\usepackage{diagbox}

\usepackage{hyperref}



\usepackage[accepted]{icml2024}

\usepackage{amsmath}
\usepackage{amssymb}
\usepackage{mathtools}
\usepackage{amsthm}
\usepackage{xspace}
\usepackage{multirow}

\usepackage[capitalize,noabbrev]{cleveref}
\usepackage{enumitem}
\setitemize{noitemsep,topsep=0pt,parsep=0pt,partopsep=0pt}

\theoremstyle{plain}
\newtheorem{theorem}{Theorem}[section]

\newtheorem{lemma}[theorem]{Lemma}

\theoremstyle{definition}
\newtheorem{definition}[theorem]{Definition}

\theoremstyle{remark}

\newcommand{\s}{\tilde{\bold{s}}}
\newcommand{\y}{\tilde{\bold{t}}}
\newcommand{\f}{\bold{f}}

\newcommand{\bz}{\bold{h}_t}

\newcommand{\bx}{\bold{h}_s}
\newcommand{\GraphRepr}{\bold{h}_g}
\newcommand{\x}{{h_s}}
\newcommand{\z}{{h_t}}
\newcommand{\wP}{{\bold{P}}}
\newcommand{\DofOut}{c}
\newcommand{\DofHid}{d_h}
\newcommand{\DofRepr}{d_g}
\newcommand{\oram}{O(\lVert\bold{r}-\bold{r}_0\rVert^2)}
\newcommand{\Oram}{O(\lVert\bold{r}-\bold{r}_0\rVert)}

\newcommand{\method}{\textsc{DisGen}\xspace}
\usepackage[textsize=tiny]{todonotes}

\icmltitlerunning{Enhancing Size Generalization in Graph Neural Networks through Disentangled Representation Learning}

\begin{document}

\twocolumn[
\icmltitle{Enhancing Size Generalization in Graph Neural Networks through Disentangled Representation Learning}



\icmlsetsymbol{equal}{*}

\begin{icmlauthorlist}
\icmlauthor{Zheng Huang}{dartmouth}
\icmlauthor{Qihui Yang}{yyy}
\icmlauthor{Dawei Zhou}{vt}
\icmlauthor{Yujun Yan}{dartmouth}

\end{icmlauthorlist}

\icmlaffiliation{dartmouth}{Department of Computer Science, Dartmouth College, Hanover, NH, USA.}
\icmlaffiliation{vt}{Department of Computer Science, Virginia Tech, Blacksburg, VA, USA}
\icmlaffiliation{yyy}{Electrical and Computer Engineering, UCSD, San Diego, USA.}

\icmlcorrespondingauthor{Yujun Yan}{yujun.yan@darmouth.edu}

\icmlkeywords{Machine Learning, ICML}

\vskip 0.3in
]



\printAffiliationsAndNotice{} 
\begin{abstract}

Although most graph neural networks (GNNs) can operate on graphs of any size, their classification performance often declines on graphs larger than those encountered during training. Existing methods insufficiently address the removal of size information from graph representations, resulting in sub-optimal performance and reliance on backbone models. In response, we propose \method, a novel and model-agnostic framework designed to disentangle size factors from graph representations. \method employs size- and task-invariant augmentations and introduces a decoupling loss that minimizes shared information
in hidden representations, with theoretical guarantees for its effectiveness. Our empirical results show that \method outperforms the state-of-the-art models by up {to 6\% on real-world datasets}, underscoring its
effectiveness in enhancing the size generalizability of GNNs. Our
codes are available at:  \href{https://github.com/GraphmindDartmouth/DISGEN}{https://github.com/GraphmindDartmouth/DISGEN}.

\end{abstract}

\section{Introduction}




Graphs can exhibit significant variations in size, ranging from small molecular structures to large protein networks~\cite{kearnes2016molecular, fout2017protein}, and from concise code snippets with tens of nodes to extensive programs with thousands of nodes \cite{yan2020neural, guo2020graphcodebert}. Training Graph Neural Networks (GNNs) on these datasets is often limited to small graphs due to computational constraints or data availability \cite{zhang2019graph, li2021training}. However, these models are intended for application on larger graphs, necessitating effective size generalization to handle larger test graphs.

Despite the inherent capability of GNNs to process graphs of any size, performance declines are observed when models trained on smaller graphs are applied to substantially larger ones \cite{buffelli2022sizeshiftreg, chen2022learning}. Prior work to address this challenge involves strategies such as accessing test domain graphs~\cite{yehudai2021local}, employing causal modeling for the graph generative process~\cite{chen2022learning, bevilacqua2021size}, and simulating size shifts within the training data through graph coarsening approaches~\cite{buffelli2022sizeshiftreg}. However, these methods inadequately remove
the size information from the graph representations,
resulting in sub-optimal performance and reliance
on backbone models.

Motivated by recent studies \cite{yan2023size, bevilacqua2021size} that reveal the correlation between learned graph representations and their size, we aim to tackle the size generalization problem through disentangled representation learning, a technique known for separating fundamental factors in observed data. However, directly applying existing disentangled representation learning methods \cite{sarhan2020fairness, creager2019flexibly} to our problem poses several challenges. Firstly, these methods typically rely on supervision to disentangle different information, necessitating supervision of graph size in our case. The discrete and unbounded nature of graph sizes, however, complicates their use as direct supervision labels.
Secondly, it remains unclear how to minimize the shared information between size-related and task-related representations with theoretical guarantees. Current practices often use correlation loss \cite{mo2023disentangled} or enforce orthogonality of the representations \cite{sarhan2020fairness} to segregate distinct information. However, there is limited theoretical analysis to substantiate their effectiveness, and in practice, these methods have proven suboptimal for our problem (\Cref{abl}).

To address these challenges, we propose a general \textbf{Dis}entangled representation learning framework for size \textbf{Gen}eralization (\method) of GNNs. To tackle the first challenge, we introduce new augmentation strategies to guide the model in learning relative size information.
Specifically, we create two views—size- and task-invariant
views—for a given input graph and facilitate the learning of
their relative size through a contrastive loss. To tackle the second challenge, we propose a decoupling loss to minimize the shared information between the hidden representations optimized for size- and task-related information, respectively. We further provide theoretical guarantees to justify the effectiveness of the decoupling loss. 

Our contributions can be summarized as follows:
\begin{itemize}
    \item \textbf{Novel model-agnostic framework:} To the best of our knowledge, \method is the first disentangled representation learning framework to tackle the size generalization problem for GNNs.
    \item \textbf{Novel designs with theoretical guarantees:} We propose new augmentation strategies and novel decoupling loss to segment size- and task-related information. We also provide theoretical guarantees to justify the effectiveness of our proposed loss.
    \item \textbf{Extensive experiments:} Our empirical results show that \method outperforms the state-of-the-art models by up {to 6\% on real-world datasets, highlighting its enhanced size generalizability for GNNs.}
\end{itemize}
\section{Preliminary}

$\textbf{Notations. }$ 
Consider a set of $n$ graphs denoted by $\{\mathcal{G}_1, \mathcal{G}_2,..., \mathcal{G}_n\}$, where each $\mathcal{G}_i=(\mathcal{V}_i, \mathcal{E}_i)$ represents the $i$-th graph with $N=|\mathcal{V}_i|$ nodes and $E=|\mathcal{E}_i|$ edges. We denote the neighborhood of a node $v$ as $\mathcal{N}_v$, defined as $\mathcal{N}_v=\{u \mid (u,v) \in \mathcal{E}_i\}$. Furthermore, we denote the size-invariant and task-invariant views augmented from graph $\mathcal{G}_i$ as $\mathcal{G}_i^{(1)}$ and $\mathcal{G}_i^{(2)}$, respectively. As to the matrix representations, 
we use $\bold{X}_{i} \in \mathbb{R}^{N\times d_f}$ and $\bold{A}_{i} \in \mathbb{R}^{N\times N}$ 
to represent the feature matrix and adjacency matrix of $\mathcal{G}_i$,
respectively, where $d_f$ is the dimension of node features.
Moreover, 
we use $\bold{h}_{g\_i}\in \mathbb{R}^{d_g}$ to denote the representation of graph $\mathcal{G}_i$, where $d_g$ is the dimension size. Additional matrix notations will be introduced as the paper progresses. 
Generally, we use a bold lowercase letter to denote a vector and a bold uppercase letter to denote a matrix used in our framework. In addition, superscripts are employed to denote matrices associated with augmented graphs, while subscripts are utilized to indicate matrices specific to a particular graph. We denote the entry at the $(p, q)$ position of matrix $\bold{M}$ as $\bold{M}[p, q]$. Furthermore, for any multivariate function 
$\f(\y,\s): (\mathbb{R}^{\DofOut}, \mathbb{R}^{\DofOut})\mapsto \mathbb{R}^{\DofRepr}$, 
we use the expression $\partial \f /\partial \s\equiv\bold{0}$
to indicate that for every component function $f_i$ of the multivariate function $\f$, its partial derivative with respect to every component of the input vector $\s$ is always 0, i.e. $\partial f_i/ \partial s_j\equiv0$ for all $i$ and $j$. 


%

\textbf{Problem Setup. }
In this paper, we define graph size as the number of nodes {in the graph and study the size} generalizability of GNNs \cite{chami2022machine, wu2020comprehensive, maron2018invariant}.
We define a GNN model as size generalizable if it demonstrates generalizability to test graphs with sizes larger than those in the training set.
We focus on a supervised graph classification task, where each graph $\mathcal{G}_i$ is assigned a label $y_i$, and the goal is to learn a GNN model $f_{\theta}:(\mathbf{A}_{i}, \mathbf{X}_{i}) \mapsto y_i $  that maps each graph $\mathcal{G}_i$ to $y_i$. The graph classification objective is given by a \texttt{CrossEntropy} loss. Our goal is to design a framework, denoted by $g$, where $g\circ f_{\theta}(\mathbf{A}_{i}, \mathbf{X}_{i})$ yields a more accurate estimation of $y_i$ for graphs with sizes larger than those encountered during training.

\textbf{Disentangled Representation Learning. }
%
Disentangled representation learning aims to separate and isolate the fundamental factors within observed data. By incorporating supervision or prior knowledge, this approach promotes the independence of different factors. This independence can be achieved through various methods, such as optimizing the Pearson’s correlation coefficient to zero \cite{mo2023disentangled}, enforcing orthogonality \cite{sarhan2020fairness}, or minimizing cosine similarity \cite{lidisentangled} of hidden representations. 

\textbf{Explainable GNN Model. }
Explainable GNN models \cite{ying2019gnnexplainer, luo2020parameterized, li2023interpretable, yan2019groupinn} are effective tools for understanding the predictions made by GNN models. Our focus is on perturbation-based methods \cite{yuan2022explainability, luo2020parameterized}, which assess the importance scores of input graphs by monitoring changes in predictions resulting from different input perturbations. The intuition behind these approaches is that retaining task-relevant information in the inputs should result in predictions similar to the original ones. The explainable model takes as input a graph's adjacency matrix $\bold{A}_i$, feature matrix $\bold{X}_i$, its label $y_i$, and a trained GNN $f_\theta$. The output is a weighted matrix $\bold{M} \in \mathbb{R}^{N \times N}$ that indicates the importance of edges, serving as the explanation for the given label.

\section{Methodology}


\begin{figure*}[h]
\vspace{-2cm}

  \centering
  \includegraphics[width=0.9\linewidth]{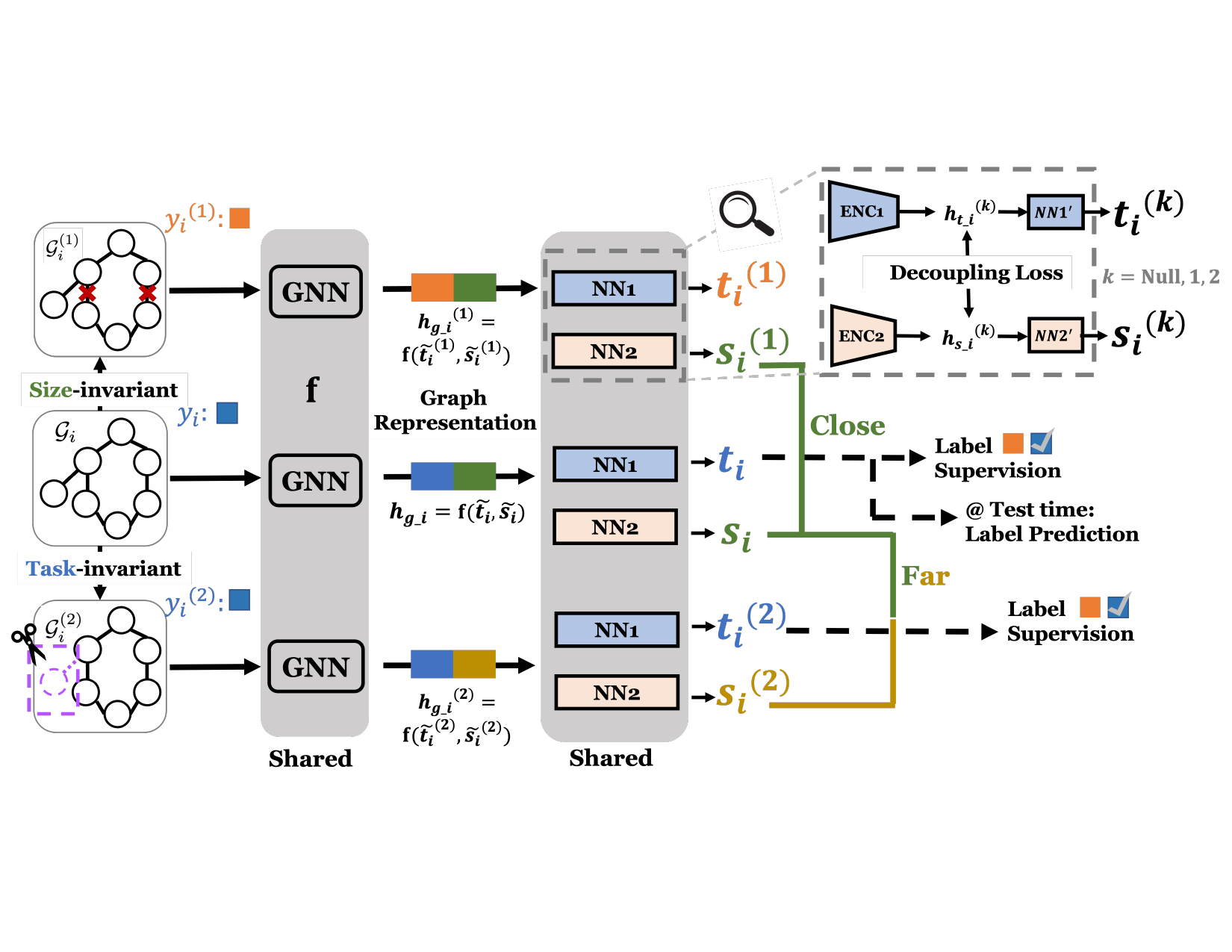}
  \vspace{-2cm}

  \caption{Framework overview: our model augments each graph $\mathcal{G}_i$ with size- and task-invariant views ($\mathcal{G}_i^{(1)}$ and $\mathcal{G}_i^{(2)}$), which, along with the original graph, are processed by a shared GNN backbone. Two encoders then generate size- ($\bold{s}_i$) and task-related ($\bold{t}_i$) representations, respectively. A contrastive loss on size-related representations guides relative size learning, while a decoupling loss ensures the separation of size- and task-related information.}
  \label{fig:framework}
 \vspace{-0.5cm}

\end{figure*}

\begin{figure}[h]
\vspace{-0.8cm}

\label{fig:aug}
  \centering
  \includegraphics[width=\linewidth]{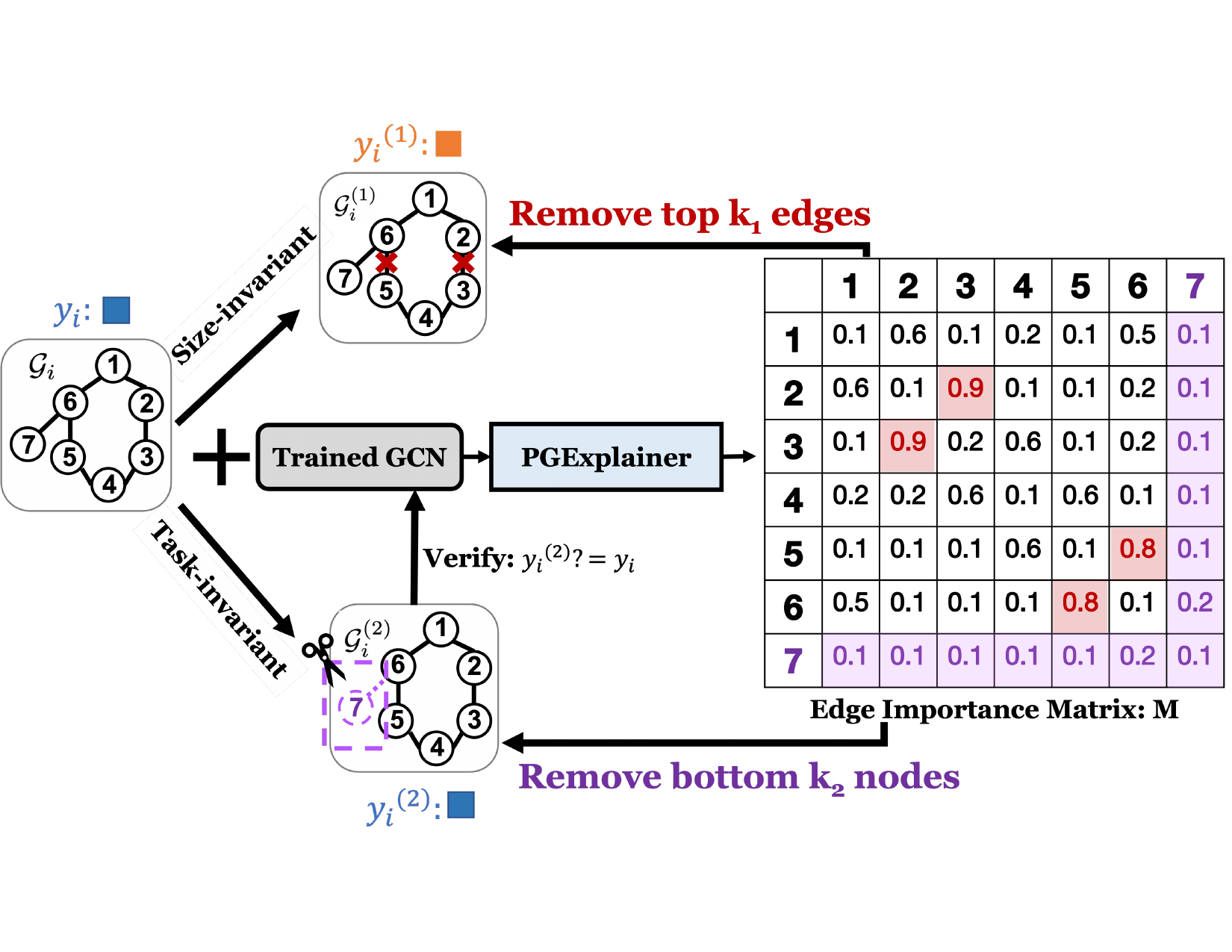}
\vspace{-1cm}

  \caption{Augmentation overview: view $\mathcal{G}_i^{(1)}$ is generated by removing edges that most significantly change the label information, 
  while $\mathcal{G}_i^{(2)}$ results from eliminating nodes that have little impact on the model predictions.}

  \label{fig:augmentation}
\end{figure}





In this section, we present our framework \method that
utilizes disentangled representation learning to distinguish size- and task-related information in graph representations learned by GNN backbones. 
To address the first challenge posed by the discrete and unbounded nature of graph sizes, we propose new augmentation strategies to guide the model in learning relative size information. Specifically, we create two views—size- and task-invariant views—for a given input graph and facilitate the learning of their relative sizes through a contrastive loss.
To address the second challenge of minimizing the shared information to disentangle size- and task-related representations, we introduce a decoupling loss with theoretical guarantees. 
In this section, we first present an overview of the framework in \Cref{Model Framework}, followed by a discussion on learning size-related information through graph augmentations in \Cref{Augmentation}. Lastly, we introduce the decoupling loss and discuss its theoretical guarantees in \Cref{decoupling model}

\subsection{Framework Overview}
\label{Model Framework}
As illustrated in Figure \ref{fig:framework}, \textit{\method} initially augments each graph $\mathcal{G}_i$ with two views—size- and task-invariant views ($\mathcal{G}_i^{(1)}$ and $\mathcal{G}_i^{(2)}$). These augmented views, along with the original graph, are then fed into the shared GNN backbone. Subsequently, two encoders are employed to encode the output graph representations into size- and task-related representations ($\bold{s}_i$ and $\bold{t}_i$), respectively. 
The task-related representations of both the original and task-invariant graphs ($\bold{t}_i$ and $\bold{t}_i^{(2)}$) are supervised by their shared labels.
A contrastive loss is then applied to the size-related representations of the original graphs and their augmented views ($\bold{s}_i, \bold{s}_i^{(1)} \text{ and } \bold{s}_i^{(2)}$), aiming to guide the model in learning their relative sizes. Additionally, to ensure the separation of size- and task-related information, a decoupling loss is applied to the hidden representations responsible for generating these representations. {In Figure \ref{fig:framework}, the hidden representations for size- and task-related information are denoted by $\bold{h}_{t\_i}^{(k)}$ and $\bold{h}_{s\_i}
^{(k)}$, respectively, where $i$ is the graph index and $k$ specifies different views.}  Finally, during test time, we use the task-related representations $\bold{t}_i$ to predict the label for the graph $\mathcal{G}_i$.

\subsection{Augmentation}
\label{Augmentation}



In this subsection, we present the augmentation strategies employed to generate the size- and task-invariant views, as shown in Figure \ref{fig:augmentation}. The size-invariant view $\mathcal{G}_i^{(1)}$ maintains the same number of nodes as $\mathcal{G}$ but possesses distinct labels. Conversely, the task-invariant view $\mathcal{G}_i^{(2)}$ shares the same label as $\mathcal{G}$, but differs in the number of nodes. In more detail, view $\mathcal{G}_i^{(1)}$ is generated by selectively removing edges that most significantly change the label information, whereas view $\mathcal{G}_i^{(2)}$ is formed by eliminating nodes that have little impact on the label information. One key challenge in the augmentation process is determining whether the label information has been altered. To address this, {we employ a pre-trained GNN $f_{\theta}$ to monitor the change}. We first pre-train a backbone GNN on the same training graphs, and consider the augmented graph as changing the label information if and only if the predictions from the pre-trained GNN for the augmented graph change. 

To identify critical edges that influence label information and insignificant nodes with minimal impact, we utilize the edge importance matrix derived from graph explainable models \cite{luo2020parameterized, ying2019gnnexplainer}.
In more detail, the explainable model takes a graph $\mathcal{G}_i$, its corresponding label $y_i$, and a trained GNN model $f_{\theta}$, as the inputs, and outputs an edge importance matrix $\bold{M} \in \mathbb{R}^{N\times N}$, where the entry $\bold{M}[i, j]$ indicates the importance score of edge $e_{ij} \in \mathcal{E}_i$. This score measures the influence on the graph label prediction when removing the edge $e_{ij}$—the higher the score, the stronger the influence. As we focus on the undirected graphs, we symmetrize the matrix by: $\hat{\bold{M}} = \frac{1}{2}(\bold{M} +  \bold{M}^{T})$.  To generate the size-invariant view,
we remove $k_1$ edges with highest importance scores from the original graph $\mathcal{G}_i$, where $k_1$ is a predefined hyperparameter. To generate the task-invariant view, we compute the node importance scores and remove $k_2$ nodes with the lowest scores from the original graph, along with the edges connecting them,  where $k_2$ is a predefined hyperparameter. We define the node importance scores based on the importance scores of the edges incident to the node. Mathematically, the importance score $m_{v_j}$ of a node $v_j$ is defined as: { $m_{v_j} = \sum_{k \in \mathcal{N}_{v_j}} \hat{\bold{M}}[j,k]$}.
To verify that the task-invariant view $\mathcal{G}_i^{(2)}$ maintains the same label as the original graph, we feed it to the pre-trained GNN $f_{\theta}$, and test if the following holds:
$
f_{\theta}(\mathcal{G}) = f_{\theta}(\mathcal{G}_i^{(2)})
$. 
If the condition is not met, we reduce $k_2$ to $0.9 k_2$ and repeat the above process.

The views $\mathcal{G}_i^{(1)}$ and $\mathcal{G}_i^{(2)}$ are then utilized to guide our framework to learn the relative size information. 
Inspired by contrastive learning \cite{mo2023disentangled, li2021disentangled}, we modify the contrastive loss to convey to the model that the size-invariant view maintains the same size as the original graph while the task-invariant view has a different size. Specifically, we aim for the learned size representations of $\mathcal{G}_i$ and $\mathcal{G}_i^{(1)}$ ($\bold{s}_i \text{ and } \bold{s}_i^{(1)}$) to be close to each other as the two views share the same graph size. Conversely, we expect the size representations of $\mathcal{G}_i$ and $\mathcal{G}_i^{(2)}$ ($\bold{s}_i \text{ and } \bold{s}_i^{(2)}$) to be far away from each other, reflecting their distinct graph sizes. If we denote $c_1$ ($c_2$) as the cosine similarity between $\bold{s}_i$ and $\bold{s}_i^{(1)}$ ($\bold{s}_i^{(2)}$), the contrastive loss is given by:
$$
\mathcal{L}_s = -log \left( \frac{exp(c_1)/\tau}{exp(c_1)/\tau + exp(c_2)/\tau} \right), 
$$
where $\tau$ is a hyperparameter. 

In addition to the contrastive loss, we also provide supervision on the task-related representations $\bold{t}_i$ and $\bold{t}_i^{(2)}$ to encourage the learning of task information, where $\bold{t}_i$ and $\bold{t}_i^{(2)}$ share the same label. The supervision loss is given by:
$$
\mathcal{L}_t = -\alpha_1\texttt{CE}(y_i, \bold{t}_i)- \alpha_2\texttt{CE}(y_i, \bold{t}_i^{(2)}),
$$
where \texttt{CE} denotes the \texttt{CrossEntropy} loss, $\alpha_1$ and $\alpha_2$ are hyperparameters for regularization.

\subsection{Decoupling Loss}
\label{decoupling model}
\subsubsection{Design}
In this subsection, we introduce the decoupling loss, which aims to minimize the shared information between the hidden representations optimized for size- and task-related information. We denote the size- and task-related hidden representations of a graph as $\bold{h_s}\in \mathbb{R}^{d_h}$ and $\bold{h_t} \in \mathbb{R}^{d_h}$, respectively. The size-related hidden representations of all original and augmented graphs in a batch of size $b$ are stacked into a matrix $\bold{H}_s =\begin{bmatrix} \bold{h}_{s\_1}, \bold{h}_{s\_1}^{(1)}, \bold{h}_{s\_1}^{(2)},...,  \bold{h}_{s\_b}, \bold{h}_{s\_b}^{(1)}, \bold{h}_{s\_b}^{(2)} \end{bmatrix}^T $. Similarly, the matrix of stacked task-related hidden representations in a batch is denoted by $\bold{H}_t =\begin{bmatrix} \bold{h}_{t\_1}, \bold{h}_{t\_1}^{(1)}, \bold{h}_{t\_1}^{(2)},...,  \bold{h}_{t\_b}, \bold{h}_{t\_b}^{(1)}, \bold{h}_{t\_b}^{(2)} \end{bmatrix}^T $.
Our rationale is that if $\bold{H}_s$ and $\bold{H}_t$ contain the same information, there exists a function $f_p \in \mathcal{F}$ that can transform one representation to another: {$\bold{H}_s = f_p(\bold{H}_t)$}. When we restrict the function class $\mathcal{F}$ to linear projection functions, the substantial similarity in information between $\bold{H}_s$ and $\bold{H}_t$ suggests the existence of a linear projection $\bold{P}$ such that the difference $\bold{H}_t \bold{P} - \bold{H}_s$ is sufficiently small. For any given $\bold{H}_s$ and $\bold{H}_t$, we can find an optimal projection plane $\bold{P}_{opt}$ that minimizes the mapping residual from {$\bold{H}_t$ to $\bold{H}_s$}:
\begin{equation}
    \bold{P}_{opt} = \min_\bold{P} \Bigg\Vert \bold{H}_t  \bold{P} - \bold{H}_s \Bigg\Vert_F,
\label{eq1}
\end{equation}
where $\Vert\cdot\Vert_F$ denotes the Frobenius norm of a matrix. Let $D$ denote the residual under optimal linear projection: {$D=\Vert \bold{H}_t   \bold{P}_{opt} - \bold{H}_s \Vert_F$}. We can use {$D$} to quantify the shared information between $\bold{H}_s$ and $\bold{H}_t$. A small value of $D$ indicates a substantial overlap in information between $\bold{H}_s$ and $\bold{H}_t$, while a large {$D$} suggests minimal shared information. Recall that our goal is to minimize the shared information, thus we aim to train our framework such that $\bold{H}_s$ and $\bold{H}_t$ satisfy: 
\begin{equation}
    \max_{\bold{H}_t, \bold{H}_s} \min_\bold{P} \Bigg\Vert \bold{H}_t  \bold{P} - \bold{H}_s \Bigg\Vert_F,
\label{eq2}
\end{equation} 
The best linear projection plane $\bold{P}_{opt}$ can be obtained by setting the derivative of \Cref{eq1} with respect to $\bold{P}$ to zero. {Thus $\bold{P}_{opt}$ is given by:}
{\begin{equation}
    \bold{P}_{opt} = (\bold{H}_t^T\bold{H}_t)^{-1}\bold{H}_t^T\bold{H}_s.
\label{e3}
\end{equation}}
As a result, we denote our decoupling loss as:
\begin{equation}
\mathcal{L}_d = \frac{1}{D^2}, \; \; \; D=\Bigg\Vert \bold{H}_t   \bold{P}_{opt} - \bold{H}_s \Bigg\Vert_F.
\label{e4}
\end{equation}
While the computation of $\bold{P}_{opt}$ requires inversion, it is noteworthy that in practice, the matrix size for inversion is small, resulting in negligible slowdowns in training time.

After decoupling $\bold{h}_t$ and $\bold{h}_s$, a prediction $\bold{t}$ is made by a neural network, as well as $\bold{s}$. As a result, the objective function of our framework is: 
$
\mathcal{L} = \beta_1 \mathcal{L}_s + \beta_2 \mathcal{L}_t + \beta_3 \mathcal{L}_d,
$ where $\beta_1, \beta_2$ and $\beta_3$ are weights of different loss functions.

\subsubsection{Theoretical Analysis}
In this subsection, we provide a theoretical analysis to show the effectiveness of the decoupling loss. We begin with the assumptions and definitions. 
We assume that $\s$ and $\y \in \mathbb{R}^{\DofOut}$ are unknown ground truth vectors for graph $\mathcal{G}$. The parameter $c$ denotes the minimum dimensionality required for $\s$ and $\y$ to exclusively encode size-related and task-related information, respectively.
We further assume that
the graph representation $\GraphRepr$ is a function of 
$\s$ and $\y$,
i.e. $\GraphRepr=\f(\y,\s)$. Next, we formally define decoupling as follows:



\begin{definition}
  We say that $\y$ and $\s$ can be decoupled from $\f$ by $\bold{g}_1$ and $\bold{g}_2$,
  if given
  $\f(\y,\s): (\mathbb{R}^{\DofOut}, \mathbb{R}^{\DofOut})\mapsto \mathbb{R}^{\DofRepr}$, $\exists$ $\bold{g}_1, \bold{g}_2:\mathbb{R}^{\DofRepr} \mapsto \mathbb{R}^{\DofHid}$, satisfying that
  $$\frac{ \partial \bold{g}_1(\f(\y,\s))}{ \partial \s} \equiv \bold{0},$$
  and
  $$\frac{ \partial \bold{g}_2(\f(\y,\s))}{ \partial \y }\equiv\bold{0}.$$
\end{definition}
Note that if $\y$ and $\s$ can be decoupled from $\f$, then the following equations hold:
 $$\bold{g}_1(\f(\y,\s)) = \bold{w}_1(\y),$$
$$\bold{g}_2(\f(\y,\s)) = \bold{w}_2(\s),$$
 for some functions $\bold{w}_1$ and $\bold{w}_2$.

Our goal is to show the connection between
{maximizing $D$ (\cref{e4})}
and decoupling $\y$ and $\s$ from $\f$. Before presenting the rationale, we first show that
$\y$ and $\s$ cannot be decoupled from $\f$ if $D$ is sufficiently small.
For simplicity, we may express the composite function $\textbf{\rm{ENC}}_i\circ\f(\cdot,\cdot)$ as a function of $\bold{r}\in \mathbb{R}^{2\DofOut}$, i.e., $\textbf{\rm{ENC}}_i\circ\f (\cdot): \mathbb{R}^{2\DofOut} \mapsto \mathbb{R}^{\DofHid}$, where $\bold{r}$ is obtained by stacking $\y$ and $\s$.

\begin{theorem}\label{thm1.1}
{Consider the composite functions $\textbf{\rm{ENC}}_i \circ \f (\cdot)$, $i \in {1,2}$, defined on a closed set $S \in \mathbb{R}^{2\DofOut}$. Assume that these composite functions are twice differentiable at some point $\bold{r}_0$, and the gradients $\nabla\bz$ and $\nabla\bx$ at $\bold{r}_0$ are nonzero matrices. Furthermore, assume that the dimension $\DofOut$ of $\y$ ($\s$) and the dimension $\DofHid$ of $\bz$ ($\bx$) satisfy the condition: $\DofHid \geq 2\DofOut + 1$. } Then $\forall \bold{r} \in S$, $\exists \wP$ {of full rank}, and {some constants $\iota_1,\iota_2$}, {such that:}
\begin{equation}
\label{eq:zpx}
\resizebox{\hsize}{!}{$
\bz(\bold{r})\wP=\bx(\bold{r})+O(\lVert\bold{r}-\bold{r}_0\rVert^2), \; \iota_1<\lim_{\bold{r}\rightarrow\bold{r}_0}\lVert\frac{O(\lVert\bold{r}-\bold{r}_0\rVert^2)}{\lVert\bold{r}-\bold{r}_0\rVert^2}\rVert<\iota_2$}
  \vspace{0.1cm}
\end{equation}
$\Rightarrow$ $\y$ and $\s$ can not be decoupled from $\f(\cdot,\cdot)$ by the \rm{\textbf{ENC}}$_i(\cdot)$.
\begin{proof}
The hidden representations $\bz$ and $\bx$ are outputs of the encoders, which are functions of $\bold{r}$:
\begin{equation}
\begin{aligned}
    \bz(\bold{r})=&\bz(\y,\s)=\textbf{ENC}_1\circ \f(\bold{r}),\\ \bx(\bold{r})=&\bx(\y,\s)=\textbf{ENC}_2\circ \f(\bold{r}).
\end{aligned}
\end{equation}
We further expand each composite function into a set of functions 
$\z_{\_i}(\y,\s)$ 
and $\x_{\_i}(\y,\s): \mathbb{R}^{2\DofOut} \mapsto \mathbb{R}$
\begin{equation}
\begin{aligned}
  \bz(\y,\s) &=
  \begin{bmatrix}
    \z_{\_1}(\y,\s) & \cdots & \z_{\_\DofHid}(\y,\s)
  \end{bmatrix}\\
  &=\begin{bmatrix}
    \z_{\_1}(\bold{r}) & \cdots & \z_{\_\DofHid}(\bold{r})
  \end{bmatrix}=\bz(\bold{r}),\\
  \bx(\y,\s) &=
  \begin{bmatrix}
    \x_{\_1}(\y,\s) & \cdots & \x_{\_\DofHid}(\y,\s)
  \end{bmatrix}\\
  &=\begin{bmatrix}
    \x_{\_1}(\bold{r}) & \cdots & \x_{\_\DofHid}(\bold{r})
  \end{bmatrix}=\bx(\bold{r}).
  \end{aligned}
\end{equation}
We give the proof by contradiction.

We assume that $\y$ and $\s$ can be decoupled from $\f$ by the \rm{\textbf{ENC}}$_i(\cdot)$ function, then:
\begin{equation}
\begin{aligned}
  \bx(\s) &=
  \begin{bmatrix}
    \x _{\_ 1}(\s) & \cdots & \x _{\_ \DofHid}(\s)
  \end{bmatrix},\\
  \bz(\y) &=
  \begin{bmatrix}
    \z_{\_ 1}(\y) & \cdots & \z_{\_ \DofHid}(\y)
  \end{bmatrix}.
  \end{aligned}
\end{equation}
Using \Cref{eq:zpx}, we obtain the following expression:
\begin{equation}
  \begin{bmatrix}
    \z_{\_ 1}(\y) & \cdots & \z_{\_ \DofHid}(\y)
  \end{bmatrix}\wP_i=\x_{\_ i}(\s)+\oram,
\end{equation}
where $\wP_i$ stands for the $i$-th column of $\wP$. Taking the partial derivatives with respect to every component $\tilde{s}_j$ of $\s$ on both sides of the equation, we get:
\begin{equation}\label{eq:solutionx}
\resizebox{\columnwidth}{!}{$
\begin{aligned}
  \frac{\partial \x_{\_ i} (\s) }{\partial \tilde{s}_j}&=
  \begin{bmatrix}
    \frac{\partial \z_{\_1}(\y)}{\partial \tilde{s}_j} & \cdots & \frac{\partial \z_{\_\DofHid}(\y)}{\partial \tilde{s}_j}
  \end{bmatrix}\wP_i-\Oram\\
  &={\begin{bmatrix}
    0 & \cdots & 0
  \end{bmatrix}\wP_i}-\Oram.
\end{aligned}
$}
\end{equation}
When $\bold{r} \rightarrow \bold{r}_0$, $\Oram\rightarrow \bold{0}.$
Thus:
\begin{equation}
    \frac{\partial \x_{\_ i} (\s) }{\partial \tilde{s}_j}|_{\bold{r}_0}=0, \; \text{where } i\in \{1,\ldots,\DofHid\},  j\in \{1,\ldots,c\}.
    \label{eq_11}
\end{equation}
Similarly, taking the partial derivatives w.r.t. any $\tilde{t}_j$ on both sides of the equation yields:
{\footnotesize
\begin{equation}
\resizebox{\columnwidth}{!}{%
    $
\begin{aligned}
  &\begin{bmatrix}
  \frac{\partial \x_{\_1} (\s) }{\partial \tilde{t}_1} & \cdots & \frac{\partial \x_{\_ \DofHid} (\s) }{\partial \tilde{t}_1}\\
  \cdots & \cdots & \cdots\\
    \frac{\partial \x_{\_1} (\s) }{\partial \tilde{t}_c} & \cdots & \frac{\partial \x_{\_ \DofHid} (\s) }{\partial \tilde{t}_c}
  \end{bmatrix}
  =
  \begin{bmatrix}
  \frac{\partial \z_{\_1}(\y)}{\partial \tilde{t}_1} & \cdots & \frac{\partial \z_{\_\DofHid}(\y)}{\partial \tilde{t}_1}\\
  \cdots & \cdots & \cdots\\
    \frac{\partial \z_{\_1}(\y)}{\partial \tilde{t}_c} & \cdots & \frac{\partial \z_{\_\DofHid}(\y)}{\partial \tilde{t}_c}
  \end{bmatrix}\wP \\& -\Oram = \bold{0}.
  \end{aligned}$}
\end{equation}}
Since $\wP$ is a matrix of full rank, when $\bold{r} \rightarrow \bold{r}_0$, $\Oram\rightarrow \bold{0}$.
Thus we have:
  \begin{equation}\label{eq:solutionz}
    \frac{\partial \z_{\_i}(\y)}{\partial t_j}|_{\bold{r}_0}=0, \; \text{where } i \in \{1,\ldots,\DofHid\}, j\in \{1,\ldots,c\}
  \end{equation}
Considering \Cref{eq_11} and \Cref{eq:solutionz}, 
it can be concluded that {$\nabla\bz$ and $\nabla\bx$ at $\bold{r}_0$ are zero matrices.}
This result contradicts our initial assumptions.
Thus, $\y$ and $\s$ can not be decoupled from $\f$ by the \rm{\textbf{ENC}}$_i(\cdot)$ functions.
\end{proof}
\end{theorem}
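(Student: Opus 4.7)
The plan is to proceed by contradiction: I would assume that $\y$ and $\s$ \emph{can} be decoupled from $\f$ by the encoder composites, so that $\bz(\bold{r})=\bz(\y)$ depends only on $\y$ and $\bx(\bold{r})=\bx(\s)$ depends only on $\s$. Under this assumption, the hypothesis \eqref{eq:zpx} simplifies locally around $\bold{r}_0$ to $\bz(\y)\wP = \bx(\s)+\oram$. The goal is then to deduce that both $\nabla\bz$ and $\nabla\bx$ must vanish at $\bold{r}_0$, directly contradicting the hypothesis that these gradients are nonzero matrices.

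The first step is to differentiate both sides of this approximation with respect to each component $\tilde{s}_j$ of $\s$. Because the left-hand side is independent of $\s$ by the decoupling assumption, its partial derivative vanishes identically, while differentiating the remainder leaves a term of order $\Oram$. Evaluating at $\bold{r}=\bold{r}_0$ kills this remainder, yielding $\partial \bx/\partial \tilde{s}_j|_{\bold{r}_0}=\bold{0}$ for every $j$. Combined with the decoupling-induced identity $\partial \bx/\partial \tilde{t}_j\equiv 0$, this already forces the full Jacobian $\nabla\bx|_{\bold{r}_0}$ to be the zero matrix.

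The second, symmetric step is to differentiate both sides with respect to each $\tilde{t}_j$. The right-hand side contributes $\partial \bx/\partial \tilde{t}_j=0$ by decoupling, so the identity reduces, after letting $\bold{r}\to \bold{r}_0$, to $(\partial \bz/\partial \tilde{t}_j)|_{\bold{r}_0}\,\wP=\bold{0}$. Here I would invoke the full-rank hypothesis on $\wP$ and right-multiply by $\wP^{-1}$, concluding that $\partial \bz/\partial \tilde{t}_j|_{\bold{r}_0}=\bold{0}$. Together with the decoupling-induced $\partial \bz/\partial \tilde{s}_j\equiv 0$, this yields $\nabla\bz|_{\bold{r}_0}=\bold{0}$, completing the contradiction.

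The main obstacle I anticipate is the careful bookkeeping of the remainder: one must verify that differentiating the $\oram$ term truly produces an $\Oram$ residual that vanishes in the limit, and this is where the twice-differentiability assumption together with the two-sided bound $\iota_1<\|\oram/\|\bold{r}-\bold{r}_0\|^2\|<\iota_2$ is needed to rule out pathological lower-order corrections. A subtler conceptual issue is pinning down where the dimension condition $\DofHid\geq 2\DofOut+1$ is used; it does not appear to enter the contradiction directly, so I would expect it to be invoked implicitly when justifying that a full-rank $\wP$ can plausibly relate the $\bz$ and $\bx$ representations derived from the $2\DofOut$-dimensional input $\bold{r}$. I would make sure this dependency is made explicit rather than implicitly assumed.
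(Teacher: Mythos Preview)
Your proposal is correct and follows essentially the same approach as the paper: assume decoupling, differentiate the approximate identity $\bz(\y)\wP=\bx(\s)+\oram$ separately in the $\s$- and $\y$-directions, pass to the limit $\bold{r}\to\bold{r}_0$ so the $\Oram$ residual vanishes, and use the full-rank hypothesis on $\wP$ to conclude that both Jacobians are zero at $\bold{r}_0$, contradicting the nonzero-gradient assumption. Your instinct about the dimension condition $\DofHid\geq 2\DofOut+1$ is also right: the paper's own proof of this direction does not use it, and it only becomes active in the converse statement.
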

Theorem~\ref{thm1.1} provides a sufficient condition for $\y$ and $\s$ not to be decoupled from $\f$ by \rm{\textbf{ENC}}$_i(\cdot)$. Next, we will present a necessary condition with an additional constraint, beyond those specified in Theorem~\ref{thm1.1}.We begin by defining two matrices at $\boldsymbol{r}_0$:
{$$\resizebox{0.6\columnwidth}{!}{ 
    $\bold{B}=\begin{bmatrix}
\x_{\_1}(\boldsymbol{r}_0) & \cdots & \x_{\_\DofHid}(\boldsymbol{r}_0)\\
\frac{\partial\x_{\_1}(\boldsymbol{r}_0)}{\partial \tilde{t}_1} & \cdots & \frac{\partial\x_{\_\DofHid}(\boldsymbol{r}_0)}{\partial \tilde{t}_1}\\
\cdots & \ & \cdots\\
\frac{\partial\x_{\_1}(\boldsymbol{r}_0)}{\partial \tilde{t}_c} & \cdots & \frac{\partial\x_{\_\DofHid}(\boldsymbol{r}_0)}{\partial \tilde{t}_c}\\
\frac{\partial\x_{\_1}(\boldsymbol{r}_0)}{\partial \tilde{s}_1} & \cdots & \frac{\partial\x_{\_\DofHid}(\boldsymbol{r}_0)}{\partial \tilde{s}_1}\\
\cdots & \ & \cdots\\
\frac{\partial\x_{\_1}(\boldsymbol{r}_0)}{\partial \tilde{s}_c} & \cdots & \frac{\partial\x_{\_\DofHid}(\boldsymbol{r}_0)}{\partial \tilde{s}_c}
\end{bmatrix},$}$$}
{$$\resizebox{0.6\columnwidth}{!}{%
    $\bold{C}=\begin{bmatrix}
\z_{\_1}(\boldsymbol{r}_0) & \cdots & \z_{\_\DofHid}(\boldsymbol{r}_0)\\
\frac{\partial\z_{\_1}(\boldsymbol{r}_0)}{\partial \tilde{t}_1} & \cdots & \frac{\partial\z_{\_\DofHid}(\boldsymbol{r}_0)}{\partial \tilde{t}_1}\\
\cdots & \ & \cdots\\
\frac{\partial\z_{\_1}(\boldsymbol{r}_0)}{\partial \tilde{t}_c} & \cdots & \frac{\partial\z_{\_\DofHid}(\boldsymbol{r}_0)}{\partial \tilde{t}_c}\\
\frac{\partial\z_{\_1}(\boldsymbol{r}_0)}{\partial \tilde{s}_1} & \cdots & \frac{\partial\z_{\_\DofHid}(\boldsymbol{r}_0)}{\partial \tilde{s}_1}\\
\cdots & \ & \cdots\\
\frac{\partial\z_{\_1}(\boldsymbol{r}_0)}{\partial \tilde{s}_c} & \cdots & \frac{\partial\z_{\_\DofHid}(\boldsymbol{r}_0)}{\partial \tilde{s}_c}
\end{bmatrix}.$}$$}
\vspace{0.1cm}
\begin{theorem}\label{thm1.2}
Given the conditions in \Cref{thm1.1}, we assume that the matrices $\boldsymbol{B}$ and $\boldsymbol{C}$, defined at $\bold{r}_0$ satisfy that $\exists \boldsymbol{C}^+$, such that
$\boldsymbol{C}\boldsymbol{C}^+\boldsymbol{B}=\boldsymbol{B}$. Then $\y$ and $\s$ can not be decoupled from $\f(\cdot,\cdot)$ by the \rm{\textbf{ENC}}$_i(\cdot)$ functions \\
$\Rightarrow$ 
$\forall \bold{r} \in S$, $\exists \wP$ {and some constants $\iota_1,\iota_2$},
\begin{equation}
\label{eq:zpxright}
\resizebox{\hsize}{!}{$
\bz(\mathbf{r})\wP=\bx(\mathbf{r})+O(\lVert\mathbf{r}-\mathbf{r}_0\rVert^2), \; {\iota_1<\lim_{\mathbf{r}\rightarrow \mathbf{r}_0}\lVert\frac{O(\lVert\mathbf{r}-\mathbf{r}_0\rVert^2)}{\lVert \mathbf{r}-\mathbf{r}_0\rVert^2}\rVert<\iota_2}.
$}
  \nonumber
\end{equation}
\end{theorem}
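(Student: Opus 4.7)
The plan is to construct $\wP$ explicitly from the pseudo-inverse hypothesis and match the first-order Taylor polynomials of $\bx$ and $\bz$ at $\bold{r}_0$, so that any mismatch is automatically of second order.

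First, I would Taylor-expand $\bz(\bold{r})$ and $\bx(\bold{r})$ around $\bold{r}_0$ up to first order. Writing $\Delta=\bold{r}-\bold{r}_0$ and using the twice-differentiability assumption inherited from \Cref{thm1.1}, the $\DofHid$-dimensional row vectors admit the representations $\bx(\bold{r}) = \begin{bmatrix}1 & \Delta^T\end{bmatrix}\bold{B} + \oram$ and $\bz(\bold{r}) = \begin{bmatrix}1 & \Delta^T\end{bmatrix}\bold{C} + \oram$, where $\bold{B}$ and $\bold{C}$ are exactly the matrices defined immediately before the theorem statement: their rows stack the function values and first partial derivatives of $\bx$ and $\bz$ at $\bold{r}_0$. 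This step is simply bookkeeping of the Taylor remainder and needs no additional assumptions.

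Next, I would set $\wP := \bold{C}^{+}\bold{B}$. The hypothesis $\bold{C}\bold{C}^{+}\bold{B} = \bold{B}$ is exactly the statement that this $\wP$ solves the linear system $\bold{C}\wP = \bold{B}$. Multiplying the Taylor expansion of $\bz$ on the right by $\wP$ and substituting $\bold{C}\wP=\bold{B}$ then yields $\bz(\bold{r})\wP = \begin{bmatrix}1 & \Delta^T\end{bmatrix}\bold{B} + \oram = \bx(\bold{r}) + \oram$, which is the desired identity; the upper bound $\iota_2$ follows immediately from the standard Taylor remainder estimate under twice-differentiability of the encoders.

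The main obstacle will be the strict lower bound $\iota_1$ on the normalized residual. A plain Taylor argument only controls the residual from above; a priori the leading second-order Taylor coefficient could vanish, making the residual $o(\lVert\bold{r}-\bold{r}_0\rVert^2)$, in which case the limit in the conclusion would be zero and no positive $\iota_1$ would suffice. To force $\iota_1>0$ I would argue by contraposition to \Cref{thm1.1}: if for our chosen $\wP$ the residual were of strictly smaller order, then $\bz\wP$ would match $\bx$ beyond second order at $\bold{r}_0$, and combined with the nonzero-gradient hypothesis and the dimensional condition $\DofHid \geq 2\DofOut+1$, this rigidity would allow one to build explicit decoupling maps $\bold{g}_1,\bold{g}_2$ from the encoder outputs --- contradicting the standing assumption that $\y$ and $\s$ cannot be decoupled from $\f$ by the $\textbf{ENC}_i$'s. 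The delicate step is upgrading such a local higher-order identity at $\bold{r}_0$ to a genuine decoupling on the closed set $S$, which will most likely require the implicit function theorem together with a dimension count on the Jacobians encoded in $\bold{B}$ and $\bold{C}$.
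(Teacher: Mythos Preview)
Your first two paragraphs reproduce the paper's argument almost verbatim: Taylor-expand $\bz$ and $\bx$ at $\bold{r}_0$, recognise that the zeroth- and first-order data are precisely the rows of $\bold{C}$ and $\bold{B}$, and take $\wP$ to be a solution of $\bold{C}\wP=\bold{B}$. The paper appeals to the linear-systems lemma in the appendix together with the dimension count $\DofHid\ge 2\DofOut+1$; you go straight to $\wP=\bold{C}^{+}\bold{B}$, which is legitimate under the standing hypothesis $\bold{C}\bold{C}^{+}\bold{B}=\bold{B}$. Either way one gets $\bz(\bold{r})\wP=\bx(\bold{r})+\oram$, and that is exactly where the paper's proof ends --- it never addresses the bounds $\iota_1,\iota_2$ and never invokes the non-decouplability hypothesis.

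Where your proposal breaks down is the contraposition you sketch for $\iota_1$. You claim that if the residual were $o(\lVert\bold{r}-\bold{r}_0\rVert^2)$ one could manufacture decoupling maps and contradict the hypothesis. But look at how \Cref{thm1.1} is actually proved: it differentiates \Cref{eq:zpx} once and lets $\bold{r}\to\bold{r}_0$, using only that the remainder differentiates to $\Oram\to 0$. The lower bound $\iota_1$ plays no role there, so the implication ``small residual $\Rightarrow$ cannot decouple'' persists even when the residual is $o(\lVert\bold{r}-\bold{r}_0\rVert^2)$. A too-small residual is therefore \emph{consistent} with non-decouplability, and no contradiction is available. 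Your fallback --- upgrading a higher-order local identity to decoupling maps on all of $S$ via the implicit function theorem --- would need an entirely new mechanism unrelated to \Cref{thm1.1}, and nothing in the stated hypotheses (nonzero gradients, $\DofHid\ge 2\DofOut+1$, the pseudo-inverse condition) forces the second-order Taylor coefficients of $\bz\wP-\bx$ to be nonvanishing.
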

\begin{proof}
{Since $\bz(\bold{r})$ and $\bx(\bold{r})$ are twice differentiable at $\bold{r}_0$,}
we can expand $\bz(\bold{r})$ and $\bx(\bold{r})$ into Taylor series around $\bold{r}_0 \in S$. For $i,j \in \{1,\ldots,\DofHid\}$,
\begin{equation}
     \x_{\_i}(\bold{r})=a_{i\_0}+(\bold{r}-\bold{r}_0)^\top\bold{a}_{i\_1}
      +\oram, \nonumber
\end{equation}
\begin{equation}
      \z_{\_j}(\bold{r})=b_{j\_0}+(\bold{r}-\bold{r}_0)^\top\bold{b}_{j\_1}+\oram, \nonumber
\end{equation}
{where $a_{i\_0}=\x_{\_i}(\bold{r}_{0})$, $b_{j\_0}=\z_{\_j}(\bold{r}_{0})$, and $\bold{a}_{i\_1}$ and $\bold{b}_{j\_1}$ are gradient vectors.}

{We find the matrix $\wP$ by solving the following equations:}
\begin{equation}\label{eq:taylor}
\begin{aligned}
  &\begin{bmatrix}
    a_{1\_0}& \cdots & a_{\DofHid\_0}\\
    \bold{a}_{1\_1}& \cdots & \bold{a}_{\DofHid\_1}
    \end{bmatrix} 
    = \begin{bmatrix}
    b_{1\_0}& \cdots & b_{\DofHid\_0}\\
    \bold{b}_{1\_1}& \cdots & \bold{b}_{\DofHid\_1}
    \end{bmatrix}\wP\\
    &\Leftrightarrow \boldsymbol{B}=\boldsymbol{C}\wP
  \end{aligned}
\end{equation}
Note that there are $\DofHid + 2\DofOut\DofHid$ constraints, whereas $\wP$ has $\DofHid^2$ degrees of freedom. {Using \Cref{thm:ls} and the condition $\boldsymbol{C}\boldsymbol{C}^+\boldsymbol{B}=\boldsymbol{B}$ and $\DofHid\geq 2\DofOut+1$, it follows that the \Cref{eq:taylor} has a feasible solution.}
When the requirements in \Cref{eq:taylor} are met, the first two terms in the Taylor series of $\bz(\bold{r})\wP$ and $\bx(\bold{r})$ are equal. Thus, $\bz(\bold{r})\wP=\bx(\bold{r})+O(\lVert\bold{r}-\bold{r}_0\rVert^2)$.
\end{proof}

Next we provide a theoretic rationale for maximizing $D$ in \Cref{e4}. We assume that $\y$ and $\s$ are uniformly sampled from their domain $S$. Thus, we have:
\vspace{-0.3cm}
\begin{equation}
\resizebox{\hsize}{!}{$
\label{int}
  {\Bigg\Vert \bold{H}_t   \bold{P}_{opt} - \bold{H}_s \Bigg\Vert_F}=3b\int_{S}\frac{\lVert\bold{y}(\bold{r})\rVert}{V_{S}}d\bold{r}, \; \bold{y}(\bold{r})=\bz(\bold{r})\wP_{opt}-\bx(\bold{r}), $}
  \vspace{0.1cm}
\end{equation}
where $V_{S}=\int_{S}d\bold{r}$, $b$ is the batch size and three is the number of views.
In the following, we prove that if $\bz$, $\bx$ and $\bold{P}_{opt}$ satisfy~\Cref{eq:zpx}, there exists a universal upper bound for \Cref{int} applicable to all $\oram$ {that meet certain criteria.} Since our framework maximizes $D$, the decoupling of $\bz$ and $\bx$ from $\f$ {may be achieved when} the maximized $D$ surpasses the upper bound.



{
\begin{definition}
    Consider a Jordan-measurable set $S'$ and arbitrary constants $\rho_1$ and $\rho_2$, where $0<\rho_1<\rho_2$. 
    We define $\Omega^{S'}_{\rho_1, \rho_2, M_0}$ as the set of functions $\bold{y}(\bold{r})=\oram$ that satisfy the following conditions: $\exists M_0>0$, $3b\int_{S \setminus S'}\frac{\lVert\bold{y}(\bold{r})\rVert}{V_{S}}d\bold{r}<M_0$ and 
$\rho_1\lVert\bold{r}-\bold{r}_0\rVert^2<\lVert\bold{y}(\bold{r})\rVert<\rho_2\lVert\bold{r}-\bold{r}_0\rVert^2$ on $S'$.
\end{definition}
}

\begin{theorem}\label{thm:upb}
{
If $\exists$ $\bold{g}_1, \bold{g}_2: \mathbb{R}^{\DofRepr} \mapsto \mathbb{R}^{\DofHid}$ that can decouple the graph representation function $\f$, $\exists$ $U_{\rho_1,\rho_2,M_0}^{S'}\in\mathbb{R}$, which is the universal upper bound for any $\bold{y}(\bold{r}) \in \Omega^{S'}_{\rho_1, \rho_2, M_0}$.}
    \begin{proof}
    {$\forall \varepsilon_0>0$, we define $S_{\varepsilon_0}=\{\bold{r}\mid\lVert\bold{r}-\bold{r}_0\rVert<\frac{\varepsilon_0}{\rho_2}\}$. $\forall \bold{r} \in S_{\varepsilon_0}$, we have:}
    {\begin{equation}
      \begin{aligned}
        \rho_2\lVert\bold{r}-\bold{r}_0\rVert&<{\varepsilon_0}\\
        \lVert\bold{y}(\bold{r})\rVert<\rho_2\lVert\bold{r}-\bold{r}_0\rVert^2&<\varepsilon_0{\lVert\bold{r}-\bold{r}_0\rVert}.
      \end{aligned}
      \end{equation}
    Thus,
    {\begin{equation}
        \frac{\lVert\bold{y}(\bold{r})\rVert}{\lVert\bold{r}-\bold{r}_0\rVert}<\varepsilon_0
    \end{equation}}
    {Since $S'$ is a Jordan-measurable set, $\exists \varepsilon_1$, such that $S_{\varepsilon_1}\supseteq S'.$ Assume that $\varepsilon_0<\varepsilon_1$. Then $\forall \bold{r} \in S'\setminus S_{\varepsilon_0}$, we have:}
    {\begin{equation}
        \frac{\lVert\bold{y}(\bold{r})\rVert}{\lVert\bold{r}-\bold{r}_0\rVert}<\varepsilon_1
    \end{equation}}
    {Referring to \ref{int}, we have:}
      \begin{align*}
        &\Bigg\Vert \bold{H}_t   \bold{P}_{opt} - \bold{H}_s \Bigg\Vert_F\\
        {=}&{3b\int_{S_{\varepsilon_0}}\frac{\lVert\bold{y}(\bold{r})\rVert}{V_{S}}d\bold{r}+3b\int_{S'\setminus S_{\varepsilon_0}}\frac{\lVert\bold{y}(\bold{r})\rVert}{V_{S}}d\bold{r}}\\
        &{+3b\int_{S \setminus S'}\frac{\lVert\bold{y}(\bold{r})\rVert}{V_{S}}d\bold{r}}\\
        <&\frac{3b}{V_{S}}\int_{S_{\varepsilon_0}}\lVert\bold{r}-\bold{r}_0\rVert\frac{\lVert\bold{y}(\bold{r})\rVert}{\lVert\bold{r}-\bold{r}_0\rVert}d\bold{r}\\
        &+\frac{3b}{V_{S}}\int_{S'\setminus S_{\varepsilon_0}}\lVert\bold{y}(\bold{r})\rVert d\bold{r}+M_0\\
        <&\frac{3b}{V_{S}}\int_{S_{\varepsilon_0}}\frac{\varepsilon_0}{\rho_2}\varepsilon_0 d\bold{r}\\
        &+\frac{3b}{V_{S}}\int_{S'\setminus S_{\varepsilon_0}}\varepsilon_1 \lVert\bold{r}-\bold{r}_0\rVert d\bold{r} + M_0\\
        =&\frac{3b\varepsilon_0^2V_{S_{\varepsilon_0}}}{\rho_2V_{S}}+\frac{3b\varepsilon_1}{V_{S}}\lVert\bold{r^\prime}-\bold{r}_0\rVert\int_{S'\setminus S_{\varepsilon_0}}  d\bold{r} + M_0\\
        =&\frac{3b\varepsilon_0^2V_{S_{\varepsilon_0}}}{\rho_2V_{S}}+\frac{3b\varepsilon_1V_{S'\setminus S_{\varepsilon_0}}}{V_{S}}\lVert\bold{r^\prime}-\bold{r}_0\rVert + M_0 \\
        :=& U_{\rho_1,\rho_2,M_0}^{S'}
      \end{align*}
      }where $\bold{r^\prime}$ is some point in $S'\setminus S_{\varepsilon_0}$. In the penultimate step, we apply \Cref{thm:mv} from the Appendix. Since {$V_{S'}=V_{S'\setminus S_{\varepsilon_0}}+V_{S_{\varepsilon_0}}$}, by appropriately selecting $\varepsilon_0$, the above upper bound $U_{\rho_1,\rho_2,M_0}^{S'}$ can be minimized.
    \vspace{-0.2cm}
    \end{proof}
    \vspace{-0.4cm}
\end{theorem}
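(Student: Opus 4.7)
\textbf{Proof plan for Theorem \ref{thm:upb}.} The plan is to start from the integral representation $D = 3b\int_S \lVert \bold{y}(\bold{r})\rVert/V_S\, d\bold{r}$ with $\bold{y}(\bold{r})=\bz(\bold{r})\wP_{opt}-\bx(\bold{r})$, which is fixed by \Cref{int}, and produce a bound that depends only on the data $(\rho_1,\rho_2,M_0,S')$ and not on the particular $\bold{y}\in\Omega^{S'}_{\rho_1,\rho_2,M_0}$. The strategy is a three-piece decomposition of $S$: a tiny ball $S_{\varepsilon_0}$ around $\bold{r}_0$ where the integrand is controlled by the quadratic upper bound from $\Omega$, a middle annular region $S'\setminus S_{\varepsilon_0}$ still inside a Jordan-measurable enclosing ball, and the tail $S\setminus S'$ which is already dominated by $M_0$ by definition of $\Omega^{S'}_{\rho_1,\rho_2,M_0}$.

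First I would introduce $\varepsilon_0>0$ and the ball $S_{\varepsilon_0}=\{\bold{r}:\lVert \bold{r}-\bold{r}_0\rVert<\varepsilon_0/\rho_2\}$. Since $S'$ is Jordan-measurable, pick $\varepsilon_1$ with $S_{\varepsilon_1}\supseteq S'$ and require $\varepsilon_0<\varepsilon_1$ so that $\{S_{\varepsilon_0},\,S'\setminus S_{\varepsilon_0},\,S\setminus S'\}$ partitions $S$. On $S_{\varepsilon_0}$ the upper bound $\lVert \bold{y}(\bold{r})\rVert<\rho_2\lVert \bold{r}-\bold{r}_0\rVert^2$ combined with $\rho_2\lVert \bold{r}-\bold{r}_0\rVert<\varepsilon_0$ yields $\lVert \bold{y}(\bold{r})\rVert/\lVert \bold{r}-\bold{r}_0\rVert<\varepsilon_0$; on $S'\setminus S_{\varepsilon_0}\subseteq S_{\varepsilon_1}$ the same quadratic bound yields $\lVert \bold{y}(\bold{r})\rVert/\lVert \bold{r}-\bold{r}_0\rVert<\varepsilon_1$. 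These two pointwise estimates are the main ingredients and follow directly from the defining inequalities of $\Omega^{S'}_{\rho_1,\rho_2,M_0}$.

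Next I would bound each piece of the split integral: the integral over $S_{\varepsilon_0}$ is at most $\tfrac{3b}{V_S}\cdot\tfrac{\varepsilon_0^2}{\rho_2}V_{S_{\varepsilon_0}}$ by multiplying the pointwise bound $\lVert \bold{y}(\bold{r})\rVert<\varepsilon_0\cdot \varepsilon_0/\rho_2$ by the volume; the integral over $S'\setminus S_{\varepsilon_0}$ is handled by factoring $\lVert \bold{y}(\bold{r})\rVert\le \varepsilon_1\lVert \bold{r}-\bold{r}_0\rVert$ and applying the integral mean value result \Cref{thm:mv} from the appendix to replace $\int_{S'\setminus S_{\varepsilon_0}} \lVert \bold{r}-\bold{r}_0\rVert\,d\bold{r}$ by $\lVert \bold{r}'-\bold{r}_0\rVert\, V_{S'\setminus S_{\varepsilon_0}}$ for some $\bold{r}'\in S'\setminus S_{\varepsilon_0}$; the integral over $S\setminus S'$ is bounded by $M_0$ straight from the definition. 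Adding the three pieces gives
\[
U_{\rho_1,\rho_2,M_0}^{S'}=\frac{3b\varepsilon_0^2 V_{S_{\varepsilon_0}}}{\rho_2 V_S}+\frac{3b\varepsilon_1 V_{S'\setminus S_{\varepsilon_0}}}{V_S}\lVert \bold{r}'-\bold{r}_0\rVert+M_0,
\]
which is independent of the particular choice of $\bold{y}\in\Omega^{S'}_{\rho_1,\rho_2,M_0}$, establishing the claimed universal bound; the decoupling hypothesis is used indirectly through \Cref{thm1.2} to guarantee that functions in $\Omega^{S'}_{\rho_1,\rho_2,M_0}$ arise as the actual residual of interest.

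The main obstacle is neither the partition nor the pointwise estimates, which are routine, but rather the requirement that the resulting bound be genuinely \emph{universal} and usefully small. Concretely, the trade-off in $\varepsilon_0$ is delicate: the first summand grows with $\varepsilon_0$ while $V_{S'\setminus S_{\varepsilon_0}}$ in the second summand shrinks with $\varepsilon_0$, so one must argue that these two terms can be simultaneously controlled by a careful choice of $\varepsilon_0\in(0,\varepsilon_1)$ (the proof remarks that the bound can be minimized). The step that most requires care is the mean value step on $S'\setminus S_{\varepsilon_0}$, since applying \Cref{thm:mv} needs an integrable, continuous-on-a-measurable-set hypothesis that we must confirm holds for $\lVert \bold{r}-\bold{r}_0\rVert$ over the given domain.
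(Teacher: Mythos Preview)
Your proposal is correct and follows essentially the same approach as the paper: the same three-piece decomposition $S_{\varepsilon_0}\cup(S'\setminus S_{\varepsilon_0})\cup(S\setminus S')$, the same pointwise estimates from the quadratic bound in $\Omega^{S'}_{\rho_1,\rho_2,M_0}$, the same application of the mean-value lemma on the annular piece, and the same final expression for $U_{\rho_1,\rho_2,M_0}^{S'}$. One minor remark: the paper's proof never actually invokes the decoupling hypothesis or \Cref{thm1.2}; the bound is derived purely from membership in $\Omega^{S'}_{\rho_1,\rho_2,M_0}$, so your comment that the hypothesis is ``used indirectly'' overstates its role in the argument itself.
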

{In our proposed framework, we aim to maximize $D$. When $D>U_{\rho_1,\rho_2,M_0}^{S'}$, it follows that $\bold{y}(\bold{r}) \notin \Omega^{S'}_{\rho_1, \rho_2, M_0}$. Therefore, as $D$ increases, the likelihood of $\bold{y}(\bold{r})$ satisfying \Cref{eq:zpx} decreases, while the likelihood of decoupling $\bz$ and $\bx$ increases.}

\section{Experiments}

\begin{table*}[h]
\centering
\caption{Graph classification performance evaluated on small and large test graphs. Results are reported as average $F_1$ scores along with their standard deviations. The rightmost column shows the average improvements relative to the original performance using the same GNN backbones. For each backbone model and size category (small/large), the best performance is highlighted in red, and the second-best in violet.}
\label{tb:abl}
\resizebox{0.95\linewidth}{!}{
\begin{tabular}{l|cccccccccc}
\toprule
\multirow{2}{*}{\diagbox{\textbf{Models}}{\textbf{Dataset}}} &
  \multicolumn{2}{c}{\textbf{BBBP}} &
  \multicolumn{2}{c}{\textbf{PROTEINS}} &
  \multicolumn{2}{c}{\textbf{GraphSST2}} &
  \multicolumn{2}{c}{\textbf{NCI1}} &
  \multicolumn{2}{c}{\textbf{Avg. Imprv.}} \\ \cline{2-11}
 & \textbf{Small}          & \textbf{Large}          & \textbf{Small}          & \textbf{Large}          & \textbf{Small}          & \textbf{Large}          & \textbf{Small}          & \textbf{Large}          & \textbf{Small}  & \textbf{Large} \\ \midrule
\textbf{GCN}       & 90.79 $_{\pm 0.04}$ & 76.01 $_{\pm 0.03}$ & 71.74 $_{\pm 0.04}$ & 72.71 $_{\pm 0.03}$ & 89.85 $_{\pm 0.02}$ & 83.55 $_{\pm 0.02}$ & 53.11 $_{\pm 0.04}$ & 38.86 $_{\pm 0.07}$ & 0.0 &  0.0\\
\textbf{GCN+SSR}    & \textcolor{violet}{92.17 $_{\pm 0.21}$} & 82.06 $_{\pm 0.04}$ & 72.21 $_{\pm 0.02}$ & 70.87 $_{\pm 0.01}$ & 89.37 $_{\pm 0.04}$ & \textcolor{violet}{84.05 $_{\pm 0.03}$} & 54.37 $_{\pm 0.03}$ & 39.87 $_{\pm 0.02}$ & +0.54 & +2.16 \\
\textbf{GCN+CIGAV2} & 92.00 $_{\pm 0.09}$ & \textcolor{violet}{84.49 $_{\pm 0.07}$} & \textcolor{violet}{72.50 $_{\pm 0.07}$} & \textcolor{violet}{76.67 $_{\pm 0.02}$} & 87.51 $_{\pm 0.02}$ & 83.97 $_{\pm 0.04}$ & \textcolor{red}{56.63 $_{\pm 0.03}$} & \textcolor{violet}{41.22 $_{\pm 0.06}$} & \textcolor{violet}{+1.60} & \textcolor{violet}{+5.79} \\
\textbf{GCN+RPGNN}  & 84.44 $_{\pm 0.04}$ & 75.43 $_{\pm 0.07}$ & 71.91 $_{\pm 0.12}$ & 67.53 $_{\pm 0.18}$ & 90.00 $_{\pm 0.06}$ & 83.30 $_{\pm 0.06}$ & 52.59 $_{\pm 0.02}$ & 40.60 $_{\pm 0.05}$ & -1.89  & -0.93 \\
\textbf{GCN+IRM}   & 91.63 $_{\pm 0.13}$ & 77.33 $_{\pm 0.11}$ & 70.40 $_{\pm 0.31}$ & 74.50 $_{\pm 0.02}$  & \textcolor{red}{90.91 $_{\pm 0.12}$} & 83.28 $_{\pm 0.20}$ & 51.40 $_{\pm 0.08}$ & 39.91 $_{\pm 0.11}$ & -0.75  & +1.68 \\

\textbf{GCN+\method}  & \textcolor{red}{92.92 $_{\pm 0.09}$} & \textcolor{red}{85.63 $_{\pm 0.07}$} & \textcolor{red}{75.00 $_{\pm 0.21}$ }  & \textcolor{red}{83.41 $_{\pm 0.31}$} & \textcolor{violet}{90.08 $_{\pm 0.11}$} & \textcolor{red}{85.18 $_{\pm 0.29}$} &\textcolor{violet}{ 56.14 $_{\pm 0.02}$} & \textcolor{red}{43.46 $_{\pm 0.02}$ }& \textcolor{red}{+2.16} & \textcolor{red}{+6.64} \\ \midrule

\textbf{GIN}         & 86.53 $_{\pm 0.05}$ & 65.17 $_{\pm 0.02}$ & 74.00 $_{\pm 0.07}$   & 74.42 $_{\pm 0.08}$ & 89.76 $_{\pm 0.11}$ & 84.70 $_{\pm 0.22}$ & 47.78 $_{\pm 0.16}$ & 30.94 $_{\pm 0.10}$ & 0.00 & 0.00 \\
\textbf{GIN+SSR}     & 81.52 $_{\pm 0.08}$ & 70.44 $_{\pm 0.01}$ & 70.20 $_{\pm 0.05}$ & 75.06 $_{\pm 0.04}$ & \textcolor{violet}{90.41 $_{\pm 0.07}$} & 83.43 $_{\pm 0.44}$ & 50.11 $_{\pm 0.32}$ & 32.61 $_{\pm 0.12}$ & -1.33 & \textcolor{violet}{+3.21} \\
\textbf{GIN+CIGAV2}  & \textcolor{red}{88.46 $_{\pm 0.07}$ } & \textcolor{violet}{72.12 $_{\pm 0.02}$} & \textcolor{red}{75.44 $_{\pm 0.07}$} & \textcolor{violet}{76.67 $_{\pm 0.10}$} & 89.51 $_{\pm 0.06}$ & 84.25 $_{\pm 0.06}$ & 46.56 $_{\pm 0.04}$ & \textcolor{violet}{32.70 $_{\pm 0.33}$} & +0.33 & +2.63 \\
\textbf{GIN+RPGNN}   & 82.19 $_{\pm 0.22}$ & 70.64 $_{\pm 0.41}$ & 74.08 $_{\pm 0.05}$ & 71.03 $_{\pm 0.08}$ & 89.00 $_{\pm 0.04}$ & 83.83 $_{\pm 0.04}$ & \textcolor{red}{51.40 $_{\pm 0.03}$ }& 31.87 $_{\pm 0.32}$ & \textcolor{violet}{+0.46} & +1.45 \\
\textbf{GIN+IRM}    & 86.09 $_{\pm 0.31}$ & 65.27 $_{\pm 0.50}$ & 75.27 $_{\pm 0.02}$ & 75.07 $_{\pm 0.09}$ & \textcolor{red}{90.43 $_{\pm 0.04}$ }& \textcolor{violet}{84.84 $_{\pm 0.08}$} & 47.52 $_{\pm 0.06}$ & 30.02 $_{\pm 0.08}$ & +0.35 & -0.44 \\
\textbf{GIN+\method}    & \textcolor{violet}{87.15} $_{\pm 0.08}$ & \textcolor{red}{72.45$_{\pm 0.07}$} & \textcolor{violet}{75.30 $_{\pm 0.02}$ }& \textcolor{red}{78.25 $_{\pm 0.06}$} & 90.30 $_{\pm 0.12}$ & \textcolor{red}{84.87 $_{\pm 0.32}$} & \textcolor{violet}{50.28 $_{\pm 0.42}$} & \textcolor{red}{33.02 $_{\pm 0.33}$} & \textcolor{red}{+1.24} & \textcolor{red}{+3.34} \\ \midrule
\textbf{GT}          & 89.78 $_{\pm 0.02}$ & 83.37 $_{\pm 0.08}$ & 70.22 $_{\pm 0.01}$ & 71.62 $_{\pm 0.02}$ & \textcolor{red}{90.80} $_{\pm 0.04}$ & 83.33 $_{\pm 0.04}$ & 59.00 $_{\pm 0.12}$ & 41.50 $_{\pm 0.19}$ & 0.00 & 0.00 \\
\textbf{GT+SSR}      & \textcolor{violet}{91.85 $_{\pm 0.01}$} & 77.65 $_{\pm 0.02}$ & \textcolor{red}{73.91 $_{\pm 0.02}$} & \textcolor{violet}{73.73 $_{\pm 0.06}$} & 90.46 $_{\pm 0.12}$ & 83.44 $_{\pm 0.02}$ & 59.17 $_{\pm 0.01}$ & \textcolor{violet}{42.80 $_{\pm 0.01}$} & \textcolor{violet}{+1.87}  & -0.16 \\
\textbf{GT+CIGAV2}   & 91.43 $_{\pm 0.31}$ & 84.75 $_{\pm 0.20}$ & 70.21 $_{\pm 0.03}$ & 73.01 $_{\pm 0.04}$ & 89.47 $_{\pm 0.05}$ & \textcolor{violet}{84.56 $_{\pm 0.03}$ }& \textcolor{violet}{59.86} $_{\pm 0.03}$ & 39.23 $_{\pm 0.03}$ & +0.45  & -0.10 \\
\textbf{GT+RPGNN}    & 90.98 $_{\pm 0.19}$ & 83.16 $_{\pm 0.31}$ & 71.73 $_{\pm 0.07}$ & 70.00 $_{\pm 0.10}$ & 90.72 $_{\pm 0.43}$& 84.32 $_{\pm 0.02}$ & 58.82 $_{\pm 0.03}$  & 42.50 $_{\pm 0.04}$ & +0.77 & \textcolor{violet}{+0.27} \\
\textbf{GT+IRM}      & 91.50 $_{\pm 0.02}$ & \textcolor{violet}{85.51 $_{\pm 0.32}$} & 72.53 $_{\pm 0.01}$ & 71.68 $_{\pm 0.11}$ & 90.40 $_{\pm 0.56}$ & 83.20 $_{\pm 0.04}$ & 58.08 $_{\pm 0.04}$ & 35.69 $_{\pm 0.05}$ & +0.80 & -2.87 \\
\textbf{GT+\method}     & \textcolor{red}{93.77 $_{\pm 0.01}$ } & \textcolor{red}{88.01 $_{\pm 0.02}$} & \textcolor{violet}{73.06$_{\pm 0.03}$}   & \textcolor{red}{79.22 $_{\pm 0.03}$} & \textcolor{violet}{90.77 $_{\pm 0.02}$} & \textcolor{red}{85.03 $_{\pm 0.15}$} & \textcolor{red}{62.19 $_{\pm 0.03}$} & \textcolor{red}{45.54 $_{\pm 0.02}$} & \textcolor{red}{+2.50} & \textcolor{red}{+4.50} \\ \bottomrule
\end{tabular}
}
\vspace{-0.2cm}
\end{table*}

\begin{table*}[h]
\centering
\caption{Impact of different design choices on graph classification performance across small and large test graphs. Performance is assessed by average $F_1$ scores and their standard deviations. The rightmost column shows the average improvements relative to \method. For each size category (small/large), the model with the highest performance is highlighted in red.}
\resizebox{0.95\linewidth}{!}{
\begin{tabular}{l|cccccccccc}
\toprule
\multirow{2}{*}{\diagbox[width=15em]{\textbf{Models}}{\textbf{Dataset}}} &
  \multicolumn{2}{c}{\textbf{BBBP}} &
  \multicolumn{2}{c}{\textbf{PROTEINS}} &
  \multicolumn{2}{c}{\textbf{GraphSST2}} &
  \multicolumn{2}{c}{\textbf{NCI1}} &
  \multicolumn{2}{c}{\textbf{Avg. Imprv.}} \\ \cline{2-11}

& \textbf{Small}          & \textbf{Large}          & \textbf{Small}          & \textbf{Large}          & \textbf{Small}          & \textbf{Large}          & \textbf{Small}          & \textbf{Large}          & \textbf{Small}  & \textbf{Large} \\
\midrule
\textbf{\method}   &  \textcolor{red}{92.92 $_{\pm 0.09}$} & \textcolor{red}{85.63 $_{\pm 0.07}$} & \textcolor{red}{75.00 $_{\pm 0.21}$}   & \textcolor{red}{83.41 $_{\pm 0.31}$} & \textcolor{red}{90.08 $_{\pm 0.11}$} & \textcolor{red}{85.18 $_{\pm 0.29}$} & \textcolor{red}{56.14 $_{\pm 0.02}$} & \textcolor{red}{43.46 $_{\pm 0.02}$} & \textcolor{red}{0} & \textcolor{red}{0} \\

\textbf{w/o Aug.}  &    88.51 $_{\pm 0.01}$ & 80.29 $_{\pm 0.04}$ & 73.34 $_{\pm 0.05}$ & 72.33 $_{\pm 0.01}$ & 84.14 $_{\pm 0.02}$ & 82.62 $_{\pm 0.01}$ & 54.22 $_{\pm 0.04}$ & 38.99 $_{\pm 0.13}$ & -3.48 & -5.86 \\

\textbf{w/o Decpl.}  & 86.33 $_{\pm 0.04}$ & 78.72 $_{\pm 0.02}$ & 77.43 $_{\pm 0.02}$ & 80.41 $_{\pm 0.01}$ & 88.33 $_{\pm 0.02}$ & 83.95 $_{\pm 0.04}$ & 55.22 $_{\pm 0.03}$ & 32.98 $_{\pm 0.08}$ & -1.71 & -5.41 \\

\textbf{w/o Decpl.+Decorr.}  & 

90.34 $_{\pm 0.02}$ & 73.28 $_{\pm 0.01}$ & 77.11 $_{\pm 0.03}$ & 81.82 $_{\pm 0.03}$ & 88.81 $_{\pm 0.02}$ & 84.21 $_{\pm 0.04}$ & 54.25 $_{\pm 0.02}$ & 36.14 $_{\pm 0.11}$ & -0.91  & -5.56 \\

\textbf{w/o (Decpl.+ Aug.)+Decorr.}   & 
88.76 $_{\pm 0.00}$ & 72.54 $_{\pm 0.00}$ & 76.13 $_{\pm 0.00}$ & 82.35 $_{\pm 0.01}$ & 88.53 $_{\pm 0.02}$ & 85.12 $_{\pm 0.18}$ & 55.35 $_{\pm 0.01}$ & 35.61 $_{\pm 0.02}$ & -1.34  & -5.51 \\
\bottomrule
\end{tabular}
}
\vspace{-0.2cm}
\end{table*}

In this section, we conduct extensive experiments to evaluate our proposed framework \method. We aim to answer the following questions: (\textbf{RQ1}) Does \method effectively enhance the size generalizability of GNNs, and how does it compare to other baselines? (\textbf{RQ2}) how do various components influence the size generalizability of \method? 

\subsection{Experimental Setup}

\textbf{Datasplits.} Each dataset is divided into four subsets: training, validation, small test, and large test. The small test sets include graphs similar in size to the training set, while the large test sets contain significantly larger graphs. The splits are generated by first sorting the dataset samples by size. The training, validation, and small test subsets are randomly selected from the smallest 50\% of graphs, while the large test subsets are chosen from the largest 10\% of graphs. Further details can be found in Appendix \ref{app:data pre-processing}.

\textbf{Datasets.} We perform experiments on four datasets: BBBP \cite{wu2018moleculenet}, PROTEINS \cite{hobohm1992selection}, GraphSST2 \cite{yuan2021explainability}, and NCI1 \cite{xinyi2018capsule}. Each method is evaluated using the $F_1$ score. Details about the datasets
can be found in Appendix \ref{app:dataset}. {We also test our method on two larger datasets and report the prediction performance in Appendix \ref{Two more real-world datasets}.  }

\textbf{Baselines.} We assess the performance of \method against four baseline approaches for graph size generalization: SizeShiftReg (SSR) \cite{buffelli2022sizeshiftreg}, CIGAV2 \cite{chen2022learning}, RPGNN \cite{murphy2019relational}, and IRM \cite{arjovsky2019invariant}. We utilize different GNN backbones—GCN \cite{kipf2016semi}, GIN \cite{xu2018powerful}, and GraphTransformer \cite{shi2020masked}—in both the mentioned approaches and our framework. { The training details and hyperparameter choices can be found in Appendix \ref{app:Training Details}.}

\subsection{Effectiveness of \method}
We assess the graph prediction performance of \method on both small and large test graphs, comparing it against four baseline methods. The results are summarized in Table \ref{tb:abl}. Our results reveal two key insights:

First, across various datasets, \method consistently improves the graph prediction performance of various GNN backbones on both small and large test graphs. Notably, \method achieves an average improvement of up to 2.50\% on small test graphs and up to 6.64\% on large test graphs. These results highlight the effectiveness of \method in enhancing both in-distribution and size generalization.

Second, \method surpasses other baselines by achieving the highest improvements in $F_1$ scores for both small and large test graphs, with a more pronounced impact on the latter. Compared to other competitive baselines, such as SSR \cite{buffelli2022sizeshiftreg} and CIGAV2 \cite{chen2022learning}, \method's superior efficacy stems from its disentangled learning approach. Through explicit removal of size-related information from task-related representations, \method consistently exhibits improvements across various GNN backbones. In contrast, SSR and CIGAV2 fail to improve the size generalizability for the graph transformer model \cite{shi2020masked}.
\begin{table*}[h]
\centering
\caption{Impact of different augmentations on graph classification performance across small and large test graphs. Performance is assessed by $F_1$ scores and their standard deviations. The rightmost column shows the average improvements relative to \method. For each
size category (small/large), the model with the highest performance is highlighted in red.}
\resizebox{0.95\textwidth}{!}{
\begin{tabular}{l|cccccccccc}
\toprule
\multirow{2}{*}{\diagbox[width=10em]{\textbf{Models}}{\textbf{Dataset}}} &

  \multicolumn{2}{c}{\textbf{BBBP}} &
  \multicolumn{2}{c}{\textbf{PROTEINS}} &
  \multicolumn{2}{c}{\textbf{GraphSST2}} &
  \multicolumn{2}{c}{\textbf{NCI1}} &
  \multicolumn{2}{c}{\textbf{Avg. Imprv.}} \\ \cline{2-11}
 &
  \textbf{Small} &
  \textbf{Large} &
  \textbf{Small} &
  \textbf{Large} &
  \textbf{Small} &
  \textbf{Large} &
  \textbf{Small} &
  \textbf{Large} &
  \textbf{Small} &
  \textbf{Large} \\ 
\midrule

\textbf{\method} &
    \textcolor{red}{92.92 $_{\pm 0.09}$} &
    \textcolor{red}{85.63 $_{\pm 0.07}$} &
    \textcolor{red}{75.00 $_{\pm 0.21}$} &
    \textcolor{red}{83.41 $_{\pm 0.31}$} &
    \textcolor{red}{90.08 $_{\pm 0.11}$} &
    \textcolor{red}{85.18 $_{\pm 0.29}$} &
    \textcolor{red}{56.14 $_{\pm 0.02}$} &
    \textcolor{red}{43.46 $_{\pm 0.02}$} &
    \textcolor{red}{0} &
    \textcolor{red}{0} \\
\textbf{W/o\_size-inv\_branch} &
  91.00 $_{\pm 0.02}$ &
  82.45 $_{\pm 0.06}$ &
  73.35 $_{\pm 0.03}$ &
  81.11 $_{\pm 0.03}$ &
  89.46 $_{\pm 0.01}$ &
  83.38 $_{\pm 0.02}$ &
  56.28 $_{\pm 0.03}$ &
  39.03 $_{\pm 0.02}$ &
  -1.01 &
  -2.93 \\
\textbf{W/o\_task-inv\_branch} &
  90.12 $_{\pm 0.02}$ &
  81.01 $_{\pm 0.01}$ &
  72.39 $_{\pm 0.03}$ &
  80.76 $_{\pm 0.06}$ &
  89.48 $_{\pm 0.01}$ &
  82.33 $_{\pm 0.03}$ &
  54.10 $_{\pm 0.04}$ &
  36.99 $_{\pm 0.10}$ &
  -2.01 &
  -4.15 \\
  \bottomrule
\end{tabular}
}
\label{tb:abl_aug}
\vspace{-5mm}
\end{table*}

\subsection{Ablation Study}
\label{abl}
{\textbf{Design Choices. } }In this subsection, we study how various design choices impact the size generalizability of \method. For simplicity, we employ GCN~\cite{kipf2016semi} as the backbone model, though our findings are applicable to other GNN backbones. The following design choices are considered: \textbf{w/o Aug.}, we remove the augmentation components from \method; \textbf{w/o Decpl.}, we exclude the decoupling loss in \method; \textbf{w/o Decpl.+Decorr.}, we replace the decoupling loss with a widely used decorrelation loss that enforces cosine similarity to approach 0; and \textbf{w/o (Decpl.+Aug.)+Decorr.}, we apply the same decorrelation loss to \method and remove both augmentations and decoupling loss. The results for different design choices are presented in Table \ref{tb:abl}. In general, the removal or replacement of the augmentations and decoupling loss results in significant performance degradation on both small and large test graphs, with a more pronounced effect on the latter. This emphasizes the effectiveness of these two designs. Specifically, excluding the decoupling loss leads to a reduction of 5.41\% in $F_1$ on large test graphs, and this reduction cannot be compensated for by adding a common decorrelation loss, as demonstrated by the performance of \textbf{w/o Decpl.+Decorr.}. This underscores the crucial role of the decoupling loss in effectively disentangling size- and task-related information. Additionally, augmentations also contribute to enhancing size generalizability. This is evident in the performance of \textbf{w/o Aug.}, where the removal of augmentations results in a decline of 5.86\% in $F_1$ on large test graphs. A similar trend is observed in the performance of \textbf{w/o (Decpl.+Aug.)+Decorr.}, where the performance on large test graphs experiences a further decline compared to \textbf{w/o Decpl.+Decorr.}.

{\textbf{Augmentations. } } We evaluate the impact of size- and task-invariant augmentations on the size generalizability of \method. Specifically, we create two models: one without the task-invariant branch (denoted by \textbf{w/o\_task-inv\_branch}) and another without the size-invariant branch (denoted by \textbf{w/o\_size-inv\_branch}). Due to the elimination of one branch, employing the contrastive loss becomes impractical since it needs both size- and task-invariant views for comparison. Consequently, we modify the contrastive loss. In \textbf{w/o\_task-inv\_branch} model, we replace the contrastive loss with a loss to maximize the cosine similarity of the size representations $s_i^{(1)}$ and $s_i$; and in \textbf{w/o\_size-inv\_branch} model, we use a loss to minimize the cosine similarity of $s_i^{(2)}$ and $s_i$. We present their results in Table \ref{tb:abl_aug}. As can be seen from the results, removing any augmentation leads to a performance decline on both small and large test graphs.

\begin{table}[]
\caption{Impact of pre-trained models fed to PGExplainer, assessed by average $F_1$ scores and their standard deviations.}
\resizebox{\linewidth}{!}{
\begin{tabular}{l|cc|cc}
\toprule
\multirow{2}{*}{\diagbox{\textbf{Dataset}}{\textbf{Models}}} &
\multicolumn{2}{c|}{\textbf{GIN}} & 
\multicolumn{2}{c}{\textbf{GT}} \\ \cline{2-5}
&\textbf{Small}               & \textbf{Large}               & \textbf{Small}               & \textbf{Large}               \\ \midrule
\textbf{BBBP}      & 92.22 $_{\pm 0.09}$ & 85.69 $_{\pm 0.03}$ & 91.88 $_{\pm 0.08}$ & 85.01 $_{\pm 0.03}$ \\
\textbf{PROTEINS}  & 75.01 $_{\pm 0.12}$ & 83.49 $_{\pm 0.09}$ & 74.99 $_{\pm 0.51}$ & 84.99 $_{\pm 0.02}$ \\
\textbf{GraphSST2} & 89.99 $_{\pm 0.08}$ & 85.03 $_{\pm 0.31}$ & 89.33 $_{\pm 0.05}$ & 85.21 $_{\pm 0.22}$ \\
\textbf{NCI1}      & 56.90 $_{\pm 0.05}$ & 43.51 $_{\pm 0.11}$ & 57.16 $_{\pm 0.21}$ & 45.71 $_{\pm 0.05}$ \\ \bottomrule
\end{tabular}
}
\label{Pretrained model fed main}
\vspace{-0.5cm}
\end{table}

{\textbf{Pre-trained GNNs. } } To minimize the effect of knowledge distillation \cite{yoon2022semi} from augmented views, such as task-invariant views, we employ a simple model, GCN \cite{kipf2016semi}, in the explainable graph model. To evaluate the influence of other pre-trained models, we conduct experiments using PGExplainer \cite{luo2020parameterized} with different pre-trained models (GIN, and GraphTransformer) to generate task-invariant views. Table \ref{Pretrained model fed main} presents the performance of our methods with various pre-trained models. The results indicate that (1) our model consistently performs well with different pre-trained models for augmentation, and (2) using superior pre-trained models, such as GT, may further enhance our method's performance compared to the results in Table \ref{tb:abl}.

Therefore, the observed enhancements in our model are not due to knowledge distillation from the pre-trained models, as we used the least effective pre-trained model. Instead, these improvements result from our model's ability to separate size and task information.

\section{Related Work}

\textbf{Size Generalization on Graphs. }
To enhance the size generalizability of GNNs, \citet{yehudai2021local} introduce the concept of d-patterns and propose a self-supervised framework to address discrepancies in d-patterns between small and large graphs. \citet{bevilacqua2021size} design a size-invariant causal method by modeling the generative process for graphs in the dataset. \citet{chen2022learning} propose a model that captures the invariance of graphs under various distribution shifts. 
\citet{buffelli2022sizeshiftreg} propose to simulate the graph size shifts using graph coarsening methods. \citet{chu2023wasserstein} tackle the size generalization problem with a Wasserstein barycenter matching layer, which represents an input graph using Wasserstein distances between its node embeddings and learned class-wise barycenters \cite{keriven2020convergence, agueh2011barycenters}.

\textbf{Disentangled Representation Learning. }
Existing disentangled representation learning has made significant progress in identifying and separating hidden factors in various fields. \cite{higgins2016beta, xu2021multi, sarhan2020fairness, creager2019flexibly}.
In the graph domain, GraphLoG \cite{xu2021self} employs a self-supervised learning framework to disentangle local similarities and global semantics. DGCL \cite{li2021disentangled} disentangles graph-level representations by ensuring that the factorized representations independently capture expressive information from different latent factors. Additionally, \citet{mo2023disentangled} address the complex relationships between nodes, designing a framework to separate task-relevant from task-irrelevant information in multiplex graphs.

\textbf{Graph Augmentations. }
To address the problem of data insufficiency and improve the data quality,
graph
augmentation methods are proposed to generate new graphs
by either slightly modifying the existing
data samples or generating synthetic ones \cite{trivedi2022augmentations, ding2022data, shorten2019survey}.
Perturbations on graph structures, e.g., adding or dropping edges, are widely adopted augmentation methods \cite{velivckovic2018deep, you2020graph, zhu2021graph}. Another line of research employs explainable models to identify the key structures of the input graphs, guiding the augmentation process for acquiring more effective unsupervised representations \cite{shi2023engage, wang2021molecular}. 

\section{Conclusion}

In this paper, we propose a novel framework \method to enhance the size generalization of GNNs with disentangled representation learning. We first utilize size- and task-invariant augmentations to guide the model in learning relative size information. Additionally, we design a decoupling loss to minimize
the shared information with theoretical guarantees, which effectively disentangle task- and size-related information. Comprehensive experimental results demonstrate that our \method consistently outperforms state-of-the-art methods.

\section*{Impact Statement}

This paper presents work whose goal is to advance the field of machine learning. Our work improves the generalizability of GNNs across different sizes, potentially benefiting sectors such as healthcare, bioinformatics, and program synthesis. We anticipate no direct negative societal or ethical implications from our research.



\bibliographystyle{icml2024}

\newpage
\appendix
\onecolumn

\section{Lemmas in Theoretical Analysis}
\begin{lemma}(Solutions of Linear Systems~\cite{BenIsrael1974GeneralizedIT})\label{thm:ls}
    Let $\mathbf{C} \in \mathbb{C}^{m \times n}, \mathbf{G} \in \mathbb{C}^{p \times q}, \mathbf{B} \in \mathbb{C}^{m \times q}$. Then the matrix equation
$$
\mathbf{C} \mathbf{P} \mathbf{G}=\mathbf{B}
$$
is consistent if and only if, for some $\mathbf{C}^{+}, \mathbf{G}^{+}$,
$$
\mathbf{C} \mathbf{C}^{+} \mathbf{B} \mathbf{G}^{+} \mathbf{G}=\mathbf{B},
$$
in which case the general solution is
$$
\mathbf{P}=\mathbf{C}^{+} \mathbf{B} \mathbf{G}^{+}+\mathbf{Y}-\mathbf{C}^{+} \mathbf{C} \mathbf{Y} \mathbf{G} \mathbf{G}^{+}
$$
for arbitrary $\mathbf{Y} \in \mathbb{C}^{n \times p}$.
\end{lemma}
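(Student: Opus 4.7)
\textbf{Proof proposal for Lemma~\ref{thm:ls}.}

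The plan is to treat $\mathbf{C}^+$ and $\mathbf{G}^+$ as generalized (at least $\{1\}$-)inverses, i.e.\ matrices satisfying $\mathbf{C}\mathbf{C}^+\mathbf{C}=\mathbf{C}$ and $\mathbf{G}\mathbf{G}^+\mathbf{G}=\mathbf{G}$, and reduce the entire lemma to repeated use of these two identities. I would split the argument into three short steps: (i) the ``only if'' direction of consistency, (ii) the ``if'' direction together with producing a particular solution, and (iii) the parametrization of the full solution set.

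For step (i), I would assume that $\mathbf{C}\mathbf{P}_0\mathbf{G}=\mathbf{B}$ has some solution $\mathbf{P}_0$, and compute
\[
\mathbf{C}\mathbf{C}^+\mathbf{B}\mathbf{G}^+\mathbf{G}
=\mathbf{C}\mathbf{C}^+(\mathbf{C}\mathbf{P}_0\mathbf{G})\mathbf{G}^+\mathbf{G}
=(\mathbf{C}\mathbf{C}^+\mathbf{C})\mathbf{P}_0(\mathbf{G}\mathbf{G}^+\mathbf{G})
=\mathbf{C}\mathbf{P}_0\mathbf{G}=\mathbf{B},
\]
so the stated compatibility condition must hold. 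For step (ii), conversely, assuming $\mathbf{C}\mathbf{C}^+\mathbf{B}\mathbf{G}^+\mathbf{G}=\mathbf{B}$ I would exhibit $\mathbf{P}_{\star}=\mathbf{C}^+\mathbf{B}\mathbf{G}^+$ as an explicit particular solution, since by definition $\mathbf{C}\mathbf{P}_{\star}\mathbf{G}=\mathbf{C}\mathbf{C}^+\mathbf{B}\mathbf{G}^+\mathbf{G}=\mathbf{B}$.

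For step (iii), I would verify first that every $\mathbf{P}$ of the claimed form solves the equation: substituting into $\mathbf{C}\mathbf{P}\mathbf{G}$ and again invoking $\mathbf{C}\mathbf{C}^+\mathbf{C}=\mathbf{C}$ and $\mathbf{G}\mathbf{G}^+\mathbf{G}=\mathbf{G}$, the $\mathbf{Y}$ term satisfies $\mathbf{C}\mathbf{Y}\mathbf{G}-\mathbf{C}\mathbf{C}^+\mathbf{C}\mathbf{Y}\mathbf{G}\mathbf{G}^+\mathbf{G}=\mathbf{C}\mathbf{Y}\mathbf{G}-\mathbf{C}\mathbf{Y}\mathbf{G}=\mathbf{0}$, leaving $\mathbf{C}\mathbf{P}\mathbf{G}=\mathbf{B}$. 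Conversely, given any solution $\mathbf{P}_0$, I would set $\mathbf{Y}=\mathbf{P}_0$ in the formula; the right-hand side collapses to $\mathbf{C}^+(\mathbf{C}\mathbf{P}_0\mathbf{G})\mathbf{G}^+ + \mathbf{P}_0 - \mathbf{C}^+\mathbf{C}\mathbf{P}_0\mathbf{G}\mathbf{G}^+ = \mathbf{P}_0$, showing that $\mathbf{P}_0$ is representable in the stated form.

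The main potential obstacle is a conceptual rather than computational one: being precise about which axioms of a generalized inverse are actually needed. The whole argument only uses the single identity $\mathbf{A}\mathbf{A}^+\mathbf{A}=\mathbf{A}$ (i.e.\ a $\{1\}$-inverse suffices), which should be stated up front so the reader sees that $\mathbf{C}^+,\mathbf{G}^+$ need not be Moore--Penrose pseudoinverses. Once this is noted, the proof is a sequence of direct substitutions; no genuinely hard calculation arises.
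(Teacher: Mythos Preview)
Your proof is correct and is essentially the standard argument from Ben-Israel and Greville. Note, however, that the paper does not actually prove this lemma: it is stated in the appendix as a cited result from \cite{BenIsrael1974GeneralizedIT} and used as a black box in the proof of Theorem~\ref{thm1.2}, so there is no paper-internal proof to compare against. Your observation that only the $\{1\}$-inverse property $\mathbf{A}\mathbf{A}^+\mathbf{A}=\mathbf{A}$ is needed is accurate and worth stating explicitly.
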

{In our case, $\mathbf{G}$ and $\mathbf{G}^{+}$ are identity matrices.}
\begin{lemma}(Mean-Value Theorem for multiple integrals, Theorem 14.16 in \citet{Apostol1974MathematicalA})\label{thm:mv}
Assume that $g \in \mathbb{R}$ and $f \in \mathbb{R}$ on a Jordan-measurable set $S$ in $\mathbb{R}^n$ and suppose that $g(\mathbf{x}) \geq 0$ for each $\mathbf{x}$ in $S$. Let $m=\inf f(S), M=\sup f(S)$. Then there exists a real number $\lambda$ in the interval $m \leq \lambda \leq M$ such that
\begin{equation}\label{eq:lem2.1}
    \int_S f(\mathbf{x}) g(\mathbf{x}) d \mathbf{x}=\lambda \int_S g(\mathbf{x}) d \mathbf{x} .
\end{equation}
In particular, we have
\begin{equation}
    m c(S) \leq \int_S f(\mathbf{x}) d \mathbf{x} \leq M c(S),
\end{equation} where $c(S)$ represents the area of set $S$ in $\mathbb{R}^n$.
\\
If, in addition, $S$ is connected and $f$ is continuous on $S$, then $\lambda=f\left(\mathbf{x}_0\right)$ for some $\mathbf{x}_0$ in $S$ and \Cref{eq:lem2.1} becomes
\begin{equation}\label{eq:lem2.2}
    \int_S f(\mathbf{x}) g(\mathbf{x}) d \mathbf{x}=f\left(\mathbf{x}_0\right) \int_S g(\mathbf{x}) d \mathbf{x}.
\end{equation}
In particular, \Cref{eq:lem2.2} implies $\int_S f(\mathbf{x}) d \mathbf{x}=f\left(\mathbf{x}_0\right) c(S)$, where $\mathbf{x}_0 \in S$.
\end{lemma}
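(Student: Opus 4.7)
The plan is to prove the statement in two stages, mirroring the two assertions. The first stage establishes the existence of some $\lambda \in [m,M]$ satisfying \Cref{eq:lem2.1}, using only monotonicity and linearity of the multiple Riemann integral over a Jordan-measurable set. The second stage upgrades $\lambda$ to the form $f(\mathbf{x}_0)$ under the additional hypotheses that $S$ is connected and $f$ is continuous on $S$.

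For the first stage, I would start from the pointwise double inequality $m \leq f(\mathbf{x}) \leq M$, which holds on $S$ by the very definition of $m = \inf f(S)$ and $M = \sup f(S)$. Since $g(\mathbf{x}) \geq 0$ on $S$, multiplication preserves both inequalities, giving $m\,g(\mathbf{x}) \leq f(\mathbf{x}) g(\mathbf{x}) \leq M\,g(\mathbf{x})$ for every $\mathbf{x}\in S$. Integrating over $S$ and invoking linearity and monotonicity of the multiple Riemann integral over a Jordan-measurable domain yields
\[
m \int_S g(\mathbf{x})\,d\mathbf{x} \;\leq\; \int_S f(\mathbf{x})g(\mathbf{x})\,d\mathbf{x} \;\leq\; M \int_S g(\mathbf{x})\,d\mathbf{x}.
\]
From here I would split into two cases: if $\int_S g\,d\mathbf{x}=0$, the squeeze forces $\int_S fg\,d\mathbf{x}=0$ as well, and any $\lambda \in [m,M]$ satisfies \Cref{eq:lem2.1} trivially. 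Otherwise, dividing through by the strictly positive quantity $\int_S g\,d\mathbf{x}$ isolates the unique value $\lambda = \int_S f g\,d\mathbf{x} / \int_S g\,d\mathbf{x}$, which lies in $[m,M]$ by the displayed inequality. The corollary $m\,c(S)\leq \int_S f\,d\mathbf{x} \leq M\,c(S)$ then follows by specializing to $g\equiv 1$, so that $\int_S g\,d\mathbf{x} = c(S)$.

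For the second stage, I would appeal to the fact that continuous images of connected sets are connected, so $f(S)\subseteq \mathbb{R}$ is a (possibly non-closed) interval with endpoints $m$ and $M$. Consequently, any interior value $\lambda \in (m,M)$ can be written as $f(\mathbf{x}_0)$ for some $\mathbf{x}_0 \in S$ by the intermediate value property. Substituting into the first stage gives \Cref{eq:lem2.2}, and the special case $g\equiv 1$ yields $\int_S f\,d\mathbf{x} = f(\mathbf{x}_0)\,c(S)$.

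The main obstacle I anticipate is the boundary situation $\lambda \in \{m,M\}$: without compactness of $S$, the infimum and supremum of $f$ need not be attained. I would handle this by noting that if $\lambda = m$ and $\int_S g>0$, then $\int_S (f-m)g\,d\mathbf{x}=0$ with $(f-m)g\geq 0$, which (by standard vanishing-integral arguments for continuous nonnegative integrands on a Jordan-measurable set) forces $f=m$ on the support of $g$, so any such point serves as $\mathbf{x}_0$; the case $\lambda = M$ is symmetric, and the case $\int_S g = 0$ is vacuous since any $\mathbf{x}_0\in S$ works. This case analysis, together with the interval property of $f(S)$, completes the argument.
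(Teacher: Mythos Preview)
The paper does not prove this lemma at all; it is stated in the appendix as a citation from Apostol's \emph{Mathematical Analysis} and used as a black box in the proof of Theorem~\ref{thm:upb}. So there is no ``paper's proof'' to compare against, and your proposal is essentially the standard textbook argument one would find in Apostol.

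Your Stage~1 is correct and complete. Your Stage~2 is correct for $\lambda\in(m,M)$ via the connectedness of $f(S)$. The boundary case, however, has a small but genuine gap: you invoke ``standard vanishing-integral arguments for continuous nonnegative integrands'' applied to $(f-m)g$, but this integrand is \emph{not} continuous, since only $f$ is assumed continuous while $g$ is merely Riemann integrable. The conclusion you want still holds, but the route is different: if $m$ is not attained then $f-m>0$ everywhere on $S$, and one argues that a nonnegative Riemann-integrable function with zero integral must vanish at every point of continuity; since $g$ is continuous almost everywhere and $f-m>0$ there, $(f-m)g=0$ at those points forces $g=0$ a.e., contradicting $\int_S g>0$. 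Alternatively, one can bypass the issue by noting that on any compact sub-box $K\subset S$ with $\int_K g>0$ one has $f-m\geq\delta>0$, whence $\int_K(f-m)g\geq\delta\int_K g>0$. Either patch closes the gap.
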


\section{Dataset}
\label{app:dataset}

In this section, we introduce the datasets used in our study. We utilize four pre-processed datasets for the graph classification task: BBBP from the Open Graph Benchmark \cite{hu2020open}, PROTEINS and NCI1 from the TuDataset \cite{morris2020tudataset}, and the GraphSST2 \cite{yuan2022explainability} dataset. Below is a detailed description of each dataset:
\begin{itemize}
\item \textbf{BBBP}: The Blood-Brain Barrier Penetration (BBBP) dataset originates from a study on modeling and predicting barrier permeability. It represents molecules as graphs where nodes are atoms and edges are chemical bonds. Each node features a 9-dimensional vector including atomic number, chirality, formal charge, and ring presence, among other attributes \cite{hu2020open, wu2018moleculenet}. The dataset comprises over 2000 compounds with binary labels indicating their permeability properties.\\

\item \textbf{PROTEINS}: This dataset contains macromolecular graphs of proteins, with nodes corresponding to amino acids. Edges connect nodes that are less than six Angstroms apart. Node features include three-dimensional vectors representing the type of secondary structure elements (helix, sheet, or turn). The dataset provides binary labels for protein functionality (enzyme or non-enzyme) and includes 1113 samples.\\

\item \textbf{GraphSST2}: A real-world dataset for natural language sentiment analysis. Sentences are transformed into grammar tree graphs using the Biaffine parser \cite{gardner2018allennlp}, with each node representing a word associated with corresponding 768-dimensional word embeddings. The dataset's binary classification task involves predicting the sentiment polarity of sentences and includes a total of 35909 samples.\\

\item \textbf{NCI1}: This dataset includes chemical compounds screened for their activity against non-small cell lung cancer. Graphs represent chemical compounds, where nodes denote atoms with one-hot encoded features for atom types, and edges represent chemical bonds. The dataset contains 4110 samples.
\end{itemize}

\begin{table}[ht]
\centering
\begin{tabular}{llcccc}
\toprule
\textbf{Dataset} &
   &
  \multicolumn{1}{l}{\textbf{BBBP}} &
  \multicolumn{1}{l}{\textbf{PROTEINS}} &
  \multicolumn{1}{l}{\textbf{GraphSST2}} &
  \multicolumn{1}{l}{\textbf{NCI1}} \\ \midrule
\textbf{Train}      &  & 1026 & 389 & 21857 & 1438 \\
\textbf{Validation} &  & 117  & 86  & 4686  & 310  \\
\textbf{Small test} &  & 142  & 82  & 4683  & 307  \\
\textbf{Large test} &  & 285  & 82  & 4683  & 307  \\ \bottomrule
\end{tabular}
\caption{Number of graphs in the train, validation and test sets.}
\label{app:tb_stat}

\end{table}

\begin{table}[ht]
\centering
\begin{tabular}{l|ccc}
\toprule
\textbf{Datasets} & \multicolumn{3}{c}{\textbf{Graph size}} \\ \midrule
\textbf{BBBP}              & \textbf{Train}    &  \textbf{Small test}    &  \textbf{Large test}   \\
\textbf{Max}               & 27       & 27            & 132          \\
\textbf{Mean}              & 19.3     & 19            & 30.5         \\
\midrule

\textbf{PROTEINS }         &          &               &              \\
\textbf{Max}               & 26       & 26            & 620          \\
\textbf{Mean}              & 14.9     & 16.8          & 132.8        \\
\midrule

\textbf{GraphSST2}         &          &               &              \\
\textbf{Max}               & 7        & 7             & 56           \\
\textbf{Mean}              & 4.1      & 4.1           & 32           \\
\midrule

\textbf{NCI1}              &          &               &              \\
\textbf{Max}               & 27       & 27            & 111          \\
\textbf{Mean}              & 20.4     & 20.6          & 58.4         \\ \bottomrule
\end{tabular}
\caption{Statistics on graph sizes in the train, small and large test sets.}
\label{app:tb_stat_size}

\end{table}

\section{Data Pre-processing}
\label{app:data pre-processing}
This section outlines the data pre-processing techniques employed to prepare the datasets for analysis.

\textbf{Data splits. } For each dataset, we create four distinct sets: training, validation, small test, and large test. The large test set contains graphs that are significantly larger than those in the other sets. To generate the training, validation, and small test sets, we first select the smallest 50\% of graphs from each dataset. These graphs are then randomly split in a 70:15:15 ratio for the training, validation, and small test sets, respectively. It is important to note that this split is performed within each class to maintain a consistent label distribution across the training, validation, and small test sets. The large test set is formed by selecting graphs from the remaining pool, ensuring an equal number of graphs per class as in the small test set. This selection process starts from the largest graph in each class, aiming to match the class distribution observed in the small test subset.

\textbf{Upsampling. } Despite careful data splitting, the BBBP dataset presents a significant class imbalance. To mitigate the risk of training a model biased towards the majority class, we employ an upsampling strategy during the training phase. Specifically, graphs belonging to class 0 in the BBBP dataset are upsampled at a 6:1 ratio. 
{The detailed statistics of the resulting dataset are presented in Tables~\ref{app:tb_stat} and \ref{app:tb_stat_size}.}


\section{Training Details}
\label{app:Training Details}

{This section outlines the details of the training process and the choices of hyperparameters}. We first offer detailed descriptions of the baseline models employed in our study. 

\begin{itemize}
    \item \textbf{SizeShiftReg} \cite{buffelli2022sizeshiftreg}: This method introduces a regularization approach using graph coarsening techniques to simulate size variations within the training set, thereby enhancing the size generalizability of GNNs in the graph classification tasks. The coarsening technique produces a simplified version of the original graph, preserving certain properties while altering its size, to better accommodate size shifts during training. \\

    \item \textbf{CIGAV2} \cite{chen2022learning}: This framework ensures out-of-distribution generalization under various distribution shifts by capturing the invariance of graphs. Specifically, it employs Structural Causal Models (SCMs) to characterize these distribution shifts in graphs and posits that GNNs remain invariant to these shifts if they focus on invariant and critical subgraphs. \\

    \item \textbf{RPGNN} \cite{murphy2019relational}: This framework has good representational power and is invariant to graph isomorphism. Built on the principles of finite partial exchangeability, it is model-agnostic and theoretically grounded.
\\
    
    \item \textbf{IRM} \cite{arjovsky2019invariant}: This learning paradigm aims to distinguish between properties of the training data that indicate spurious correlations and those that represent the actual phenomenon of interest, thereby enhancing generalization.

\end{itemize}

\begin{table}[ht]
{
\centering
\begin{tabular}{l|ll|cc|cc}
\toprule
\multirow{ 2}{*}{\diagbox{\textbf{Dataset}}{\textbf{Models}}} &
  \multicolumn{2}{c|}{\textbf{GCN}} &
  \multicolumn{2}{c|}{\textbf{GIN}} &
  \multicolumn{2}{c}{\textbf{GT}} \\ \cline{2-7}
&
  $\beta_1$ &
  $\beta_3$ &
  \multicolumn{1}{l}{$\beta_1$} &
  \multicolumn{1}{l|}{$\beta_3$} &
  \multicolumn{1}{l}{$\beta_1$} &
  \multicolumn{1}{l}{$\beta_3$} \\ \midrule
\textbf{BBBP}      & 0.5  & 5e4 & 0.05 & 5e8 & 0.1 & 5e9 \\
\textbf{PROTEINS}  & 0.05 & 5e4 & 0.05 & 5e4 & 0.1 & 1e4 \\
\textbf{GraphSST2} & 0.5  & 5e4 & 0.15 & 5e8 & 0.1 & 1e8 \\
\textbf{NCI1}      & 0.5  & 5e4 & 0.05 & 10  & 0.1 & 5e9 \\ \bottomrule
\end{tabular}
\caption{Setting of $\beta_1$ and $\beta_3$.}
\label{tb_hyper_model}
}
\end{table}

\begin{table}[ht]
\centering
\begin{tabular}{l|ll|lll}
\toprule
\multirow{2}{*}{\diagbox{\textbf{HPs}}{\textbf{Dataset}}}&
\multicolumn{2}{c}{\textbf{BBBP}} & \textbf{} & \multicolumn{2}{c}{\textbf{PROTEINS}} \\ \cline{2-6}
& Small                         & Large               &                & Small                         & Large               \\ \midrule
$\beta_1$         & $\beta_2 = 1$, $\beta_3 = 5e4$   &                     & $\beta_1$        & $\beta_2 = 1$, $\beta_3 = 5e4$    &                     \\ \hline
0.3             & 92.38 $_{\pm 0.31}$          & 82.91 $_{\pm 0.05}$ & 0.03           & 76.35 $_{\pm 0.02}$           & 81.92 $_{\pm 0.01}$ \\
0.4             & 89.74 $_{\pm 0.01}$          & 81.40 $_{\pm 0.02}$ & 0.04           & 77.42 $_{\pm 0.01}$           & 81.11 $_{\pm 0.04}$ \\
0.5             & 92.92 $_{\pm 0.09}$          & 85.63 $_{\pm 0.07}$ & 0.05           & 75.0 $_{\pm 0.21}$            & 83.41 $_{\pm 0.31}$ \\
0.6             & 89.43 $_{\pm 0.01}$          & 82.71 $_{\pm 0.05}$ & 0.06           & 76.09 $_{\pm 0.01}$           & 82.57 $_{\pm 0.44}$ \\
0.7             & 90.78 $_{\pm 0.01}$          & 82.18 $_{\pm 0.05}$ & 0.07           & 76.23 $_{\pm 0.01}$           & 81.40 $_{\pm 0.40}$ \\ \midrule
$\beta_2$         & $\beta_1 = 0.5$, $\beta_3 = 5e4$ &                     & $\beta_2$        & $\beta_1 = 0.05$, $\beta_3 = 5e4$ &                     \\ \hline
0.8             & 90.44 $_{\pm 0.02}$           & 82.13 $_{\pm 0.08}$ & 0.8            & 75.37 $_{\pm 0.03}$           & 80.11 $_{\pm 0.20}$ \\
0.9             & 91.43 $_{\pm 0.01}$           & 80.83 $_{\pm 0.19}$ & 0.9            & 76.4 $_{\pm 0.20}$            & 84.09 $_{\pm 0.07}$ \\
1               & 92.92 $_{\pm 0.09}$           & 85.63 $_{\pm 0.07}$ & 1              & 75.0 $_{\pm 0.21}$            & 83.41 $_{\pm 0.31}$ \\
1.1             & 92.97 $_{\pm 0.01}$           & 84.11 $_{\pm 0.12}$ & 1.1            & 77.15 $_{\pm 0.01}$           & 82.44 $_{\pm 0.25}$ \\
1.2             & 92.66 $_{\pm 0.01}$           & 81.12 $_{\pm 0.01}$ & 1.2            & 76.09 $_{\pm 0.02}$           & 81.33 $_{\pm 0.04}$ \\ \midrule
$\beta_3 (*1e4)$ & $\beta_1 = 0.5$, $\beta_2 = 1$    &                     & $\beta_3 (*1e4)$ & $\beta_2 = 1$, $\beta_1 = 0.05$   &                     \\ \hline
3               & 89.31 $_{\pm 0.11}$           & 81.76 $_{\pm 0.01}$ & 3              & 76.11 $_{\pm 0.01}$           & 80.90 $_{\pm 0.02}$ \\
4               & 92.00 $_{\pm 0.35}$           & 84.02 $_{\pm 0.05}$ & 4              & 77.90 $_{\pm 0.02}$           & 81.56 $_{\pm 0.09}$ \\
5               & 92.92 $_{\pm 0.09}$           & 85.63 $_{\pm 0.07}$ & 5              & 75.0 $_{\pm 0.21}$            & 83.41 $_{\pm 0.31}$ \\
6               & 92.66 $_{\pm 0.09}$           & 84.87 $_{\pm 0.09}$ & 6              & 75.75 $_{\pm 0.01}$           & 80.91 $_{\pm 0.51}$ \\
7               & 91.87 $_{\pm 0.03}$           & 83.58 $_{\pm 0.02}$ & 7              & 76.41 $_{\pm 0.02}$           & 80.86 $_{\pm 0.25}$ \\ \bottomrule
\end{tabular}
\caption{Impact of hyperparameter choices on graph classification performance across small and large test graphs in BBBP and PROTEINS datasets. Performance is assessed using average $F_1$ scores and their standard deviations.}
\label{tb_hyperpara}
\end{table}

Next, we describe our training setup and provide the configurations of our hyperparameters. We use a batch size of 32 and the Adam optimizer with a learning rate of 0.001.
For each of the GNN backbones, we employ three convolutional layers and a 
global mean pooling layer to generate the graph representations. To address overfitting, we implement an early stopping mechanism. This mechanism operates with a patience interval of 50 epochs, and the selection criterion is the lowest validation loss. We run each setup five times and report the average $F_1$ scores and standard deviations in Table \ref{tb:abl}.

The hyperparameter settings for the baseline models are consistent with those specified in their original publications to ensure fidelity. Our hyperparameters are chosen based on the performance on the validation set, which contains graphs of similar sizes to the training data.
For \method, we set $k_1 = k_2 = 0.2N$ in the augmentation process, where $N$ is the number of nodes for an input graph. We set $\beta_2 = 1$ for all the experiments. The values for $\beta_1$ and $\beta_3$ are specified in Table \ref{tb_hyper_model}.



 In Table \ref{tb_hyperpara}, we present our model's performance with various hyperparameter settings. We vary one hyperparameter at a time while keeping the others constant. For these experiments, we use the GCN~\cite{kipf2016semi} backbone. Our results indicate that our method is insensitive to hyperparameter changes and demonstrates consistent performance.

\section{{Two Large Real-world Datasets}}
\label{Two more real-world datasets}


We include two more real-world datasets, REDDIT-BINARY \cite{yanardag2015deep} and FRANKENSTEIN \cite{orsini2015graph}, and present their results in \Cref{tb:two more data}. Specifically, REDDIT-BINARY is a social network data. In each graph, nodes represent users, and there is an edge between them if at least one of them responds to the other’s comment. A graph is labeled based on whether it belongs to a question/answer-based community or a discussion-based community. FRANKENSTEIN is a dataset of molecules, with each molecule represented as a graph. In this representation, vertices denote chemical atoms labeled with their respective symbols, while edges represent bond types. Tables \ref{tb:size} and \ref{tb:number graph stats} provide the statistics for the two datasets. For hyperparameters, we set $\beta_2=1$ for all the experiments.  The values for $\beta_1$ and $\beta_3$ are specified in Table \ref{hp_reddit_frank}. According to these tables, the average size of test graphs exceeds 1300 nodes. The performance of \method and the baselines in graph classification is outlined in Table \ref{tb:two more data}. These results confirm that our method successfully generalizes to graphs that are up to ten times larger than those used in training.

\begin{table}[ht]
\centering
\resizebox{0.5\linewidth}{!}{

\begin{tabular}{l|ccc}
\toprule
\textbf{Datasets} & \multicolumn{3}{c}{\textbf{Graph size}} \\ \midrule
\textbf{REDDIT-BINARY}     & \textbf{Train}    & \textbf{Small test}     &  \textbf{Large test}   \\
\textbf{Max}               & 304      & 302           & 3782         \\
\textbf{Mean}              & 119.2    & 114           & 1319.5       \\
\midrule
\textbf{FRANKENSTEIN}      &          &               &              \\
\textbf{Max}               & 16       & 16            & 214          \\
\textbf{Mean}              & 10.5     & 10.7          & 40.4         \\ \bottomrule
\end{tabular}
}
\caption{Statistics on graph sizes in the train, small and large test sets.}
\label{tb:size}

\end{table}

\begin{table}[ht]
\centering
\resizebox{0.5\linewidth}{!}{

\begin{tabular}{lcc}
\toprule
\textbf{Dataset} & \textbf{REDDIT-BINARY} & \textbf{FRANKENSTEIN} \\ \midrule
\textbf{Train}            & 699                    & 1518                  \\
\textbf{Validation}       & 152                    & 327                   \\
\textbf{Small test }      & 149                    & 324                   \\
\textbf{Large test}       & 149                    & 324                   \\ \bottomrule
\end{tabular}
}
\caption{Number of graphs in the train, validation and test sets.}
\label{tb:number graph stats}

\end{table}

\begin{table}[ht]
\centering
\resizebox{0.5\linewidth}{!}{

\begin{tabular}{l|ll|cc|cc}
\toprule
\multirow{2}{*}{\diagbox{\textbf{Dataset}}{\textbf{Models}}}
&
  \multicolumn{2}{c|}{\textbf{GCN}} &
  \multicolumn{2}{c|}{\textbf{GIN}} &
  \multicolumn{2}{c}{\textbf{GT}} \\ \cline{2-7}
 &
  $\beta_1$ &
  $\beta_3$ &
  \multicolumn{1}{l}{$\beta_1$} &
  \multicolumn{1}{l|}{$\beta_3$} &
  \multicolumn{1}{l}{$\beta_1$} &
  \multicolumn{1}{l}{$\beta_3$} \\ \midrule
\textbf{REDDIT-BINARY} & 0.5 & 1e12 & 0.05 & 1e12 & 0.5 & 1e12 \\
\textbf{FRANKENSTEIN}  & 0.5 & 5e4  & 0.05 & 5e4  & 0.5 & 1e4 \\
\bottomrule
\end{tabular}
}
\caption{Setting of $\beta_1$ and $\beta_3$.}
\label{hp_reddit_frank}
\end{table}

\clearpage
\begin{table}[ht]
\centering
\resizebox{0.6\linewidth}{!}{

\begin{tabular}{l|cccc}
\toprule
\multirow{2}{*}{\diagbox{\textbf{Models}}{\textbf{Datasets}}} &
  \multicolumn{2}{c}{\textbf{REDDIT-BINARY}} &
  \multicolumn{2}{c}{\textbf{FRANKENSTEIN}} \\ 
  \cline{2-5}
 &
  \textbf{Small} &
  \textbf{Large} &
  \textbf{Small} &
  \textbf{Large} \\ \midrule
\textbf{GCN} &
  93.86 $_{\pm 0.01}$ &
  41.40 $_{\pm 0.02}$ &
  41.57 $_{\pm 0.01}$ &
  34.14 $_{\pm 0.02}$ \\
\textbf{GCN + IRM} &
  91.58 $_{\pm 0.01}$ &
  39.50 $_{\pm 0.04}$ &
  42.54 $_{\pm 0.02}$ &
  \textcolor{violet}{37.38 $_{\pm 0.02}$} \\
\textbf{GCN + SSR} &
  92.21 $_{\pm 0.02}$ &
  41.60 $_{\pm 0.02}$ &
  \textcolor{red}{45.47 $_{\pm 0.03}$} &
  37.00 $_{\pm 0.01}$ \\
\textbf{GCN + CIGAv2} &
  91.78 $_{\pm 0.01}$ &
  \textcolor{violet}{42.57 $_{\pm 0.02}$} &
  39.81 $_{\pm 0.01}$ &
  36.81 $_{\pm 0.01}$ \\
\textbf{GCN + RPGNN}&
  \textcolor{violet}{93.96 $_{\pm 0.01}$} &
  41.66 $_{\pm 0.04}$ &
  43.65 $_{\pm 0.01}$ &
  33.98 $_{\pm 0.01}$ \\
\textbf{GCN + \method} &
  \textcolor{red}{94.11 $_{\pm 0.03}$} &
  \textcolor{red}{45.16 $_{\pm 0.01}$} &
  \textcolor{violet}{44.80 $_{\pm 0.01}$} &
  \textcolor{red}{38.00 $_{\pm 0.02}$} \\ \midrule
\textbf{GIN} &
  88.47 $_{\pm 0.01}$ &
  37.44 $_{\pm 0.04}$ &
  54.69 $_{\pm 0.01}$ &
  37.38 $_{\pm 0.01}$ \\
\textbf{GIN + IRM} &
  88.70 $_{\pm 0.02}$ &
  41.36 $_{\pm 0.03}$ &
  57.33 $_{\pm 0.01}$ &
  43.64 $_{\pm 0.01}$ \\
\textbf{GIN + SSR} &
  \textcolor{violet}{89.10 $_{\pm 0.01}$} &
  \textcolor{violet}{46.76 $_{\pm 0.03}$} &
  \textcolor{red}{58.38 $_{\pm 0.01}$} &
  47.90 $_{\pm 0.01}$ \\
\textbf{GIN + CIGAv2} &
  \textcolor{red}{89.57 $_{\pm 0.03}$} &
  45.80 $_{\pm 0.03}$ &
  57.33 $_{\pm 0.02}$ &
  \textcolor{violet}{48.01 $_{\pm 0.02}$} \\
\textbf{GIN + RPGNN} &
  86.46 $_{\pm 0.01}$ &
  39.67 $_{\pm 0.04}$ &
  56.48 $_{\pm 0.01}$ &
  47.59 $_{\pm 0.03}$ \\
\textbf{GIN + \method} &
  88.91 $_{\pm 0.03}$ &
  \textcolor{red}{48.24 $_{\pm 0.05}$} &
  \textcolor{violet}{57.87 $_{\pm 0.01}$} &
  \textcolor{red}{48.30 $_{\pm 0.02}$} \\ \midrule
\textbf{GT} &
  \textcolor{red}{94.22 $_{\pm 0.01}$} &
  59.43 $_{\pm 0.08}$ &
  59.72 $_{\pm 0.01}$ &
  52.26 $_{\pm 0.01}$ \\
\textbf{GT + IRM} &
  92.15 $_{\pm 0.01}$ &
  58.41 $_{\pm 0.02}$ &
  59.70 $_{\pm 0.02}$ &
  51.65 $_{\pm 0.02}$ \\
\textbf{GT + SSR} &
  91.92 $_{\pm 0.01}$ &
  60.11 $_{\pm 0.02}$ &
  52.04 $_{\pm 0.04}$ &
  \textcolor{violet}{53.23 $_{\pm 0.03}$} \\
\textbf{GT + CIGAv2} &
  91.50 $_{\pm 0.01}$ &
  \textcolor{violet}{61.99 $_{\pm 0.01}$} &
  58.20 $_{\pm 0.01}$ &
  54.29 $_{\pm 0.02}$ \\
\textbf{GT + RPGNN} &
  92.40 $_{\pm 0.01}$ &
  59.22 $_{\pm 0.01}$ &
  \textcolor{red}{59.82 $_{\pm 0.02}$} &
  50.30 $_{\pm 0.01}$ \\
\textbf{GT + \method} &
  \textcolor{violet}{93.27 $_{\pm 0.03}$} &
  \textcolor{red}{63.19 $_{\pm 0.03}$} &
  \textcolor{violet}{59.76 $_{\pm 0.11}$} &
  \textcolor{red}{53.99 $_{\pm 0.01}$} \\ \bottomrule
\end{tabular}
}
\caption{Graph classification performance evaluated on two large real-world datasets. Results are reported in average $F_1$ scores along with their standard deviations. The highest $F_1$ scores are highlighted in red, and the second-highest are in violet. }

\label{tb:two more data}
\end{table}




\end{document}